\begin{document}

%

%

\twocolumn[

\aistatstitle{
Adaptive Minimax Regret against Smooth Logarithmic Losses
over High-Dimensional $\ell_1$-Balls via Envelope Complexity
}


\aistatsauthor{
Kohei Miyaguchi$^1$
\and
Kenji Yamanishi$^1$
}

\aistatsaddress{
$^{12}$The University of Tokyo, 7-3-1 Hongo, Tokyo, Japan\\
$^1$kohei\_miyaguchi@mist.i.u-tokyo.ac.jp\\
$^2$yamanishi@mist.i.u-tokyo.ac.jp
}]

\begin{abstract}
We develop a new theoretical framework, the \emph{envelope complexity},
to analyze the minimax regret with logarithmic loss functions and
derive a Bayesian predictor that adaptively achieves the minimax regret over high-dimensional $\ell_1$-balls within a factor of two.
The prior is newly derived for achieving the minimax regret
and called the \emph{spike-and-tails~(ST) prior} as it looks like.
The resulting regret bound is so simple that it is completely determined with
the smoothness of the loss function and the radius of the balls
except with logarithmic factors,
and it has a generalized form of existing regret/risk bounds.
In the preliminary experiment, we confirm that
the ST prior outperforms the conventional minimax-regret prior under non-high-dimensional asymptotics.
\end{abstract}

\newcommand{\envmmr}{\mathrm{REG}^{\rm env}}
\newcommand{\regstoc}{\mathrm{REG}^{\rm stoc}}
\newcommand{\mmr}{\mathrm{REG}^{\star}}
\newcommand{\bhull}{\mathrm{Bayes}}
\newcommand{\holine}{\overline{h}}
\newcommand{\reg}{\mathrm{REG}}
\newcommand{\lreg}{\mathrm{LREG}}
\newcommand{\blreg}{\mathrm{BLREG}}
\newcommand{\bayes}{\mathrm{Bayes}}
\newcommand{\allh}{\hat{\Hcal}}
\renewcommand{\d}{\mathrm{d}}
\newcommand{\st}{{\rm ST}}
\newcommand{\Ln}{\mathop{\rm Ln}}

\section{Introduction}

As a notion of complexity of predictive models~(sets of predictors),
\emph{minimax regret} has been considered in the literature of
online learning~\citep{cesa2006prediction} and the minimum description length~(MDL) principle~\citep{rissanen1978modeling,grunwald2007minimum}.
The minimax regret of a model $\Hcal$ is given by
\begin{align}
    \reg^\star(\Hcal)&=\inf_{\hhat\in \allh} \sup_{X\in\Xcal} \cbr{f_X(\hhat) - \inf_{h\in\Hcal} f_X(h)},
    \label{eq:minimax_regret}
\end{align}
where $f_X(h)$ denotes the loss of the prediction over data $X$ made by $h$,
$\hat{\Hcal}$ denotes the feasible predictions and $\Xcal$ is the space of data.
Here, the data may consist of a sequence of datum, $X=X^n=(X_1,\ldots,X_n)$,
and the loss maybe additive, $f_X(h)=\sum_{i=1}^{n} f_{X_i}(h)$, but we keep them implicit for generality.
The minimax regret is a general complexity measure in the sense that
it is defined without any assumptions on the generation process of $X$.
For instance,
one can bound statistical risks with $\reg^\star(\Hcal)$
regardless of the distribution of data~\citep{littlestone1989line,cesa2004generalization,cesa2008improved}.
Therefore, bounding the minimax regret and constructing the corresponding predictor $\hhat$ is important to make a good and robust prediction.

We consider that $\Hcal$ is parametrized by
a real-valued vector $\theta\in\RR^d$,
$\Hcal=\mysetinline{h_\theta}{\gamma(\theta)\le B,\;\theta\in\RR^d}$,
where $\gamma(\theta)$ denotes a radius function such as norms of $\theta$.
Thus, we may consider the luckiness minimax regret~\citep{grunwald2007minimum},
\begin{align}
    \lreg^\star(\gamma)&=\inf_{\hhat\in \allh} \sup_{X\in\Xcal} \cbr{f_X(\hhat) - \inf_{\theta\in\RR^d} \sbr{f_X(\theta)+\gamma(\theta)}},
    \label{eq:luckiness_minimax_regret}
\end{align}
instead of the original minimax regret.
Here, we abuse the notation $f_X(\theta)=f_X(h_\theta)$.
There are at least three reasons for adopting this formulation.
Firstly, as we do not assume the underlying distribution of $X$,
it may be plausible to pose a soft restriction as in~\eqref{eq:luckiness_minimax_regret} rather than the hard restriction in~\eqref{eq:minimax_regret}.
Secondly, it is straightforwardly shown that the luckiness minimax regret bounds above the minimax regret. 
Thus, it is often sufficient to bound $\lreg^\star(\gamma)$ for bounding $\reg^\star(\Hcal)$.
Finally, the luckiness minimax regret is including the original minimax regret as a special case such that $\gamma(\theta)=0$ if $\theta\in\Hcal$ and 
$\gamma(\theta)=\infty$ otherwise.
Therefore, we may avoid possible computational difficulties of the minimax regret by choosing the penalty $\gamma$ carefully.

That being said, the closed-form expression of the exact (luckiness) minimax regret is
even intractable except with few special cases~(e.g., \cite{shtar1987universal,koolen2014efficient}).

However, if we focus on information-theoretic settings,
i.e., the model $\Hcal$ is a set of probabilistic distributions,
everything becomes explicit.
Now, let predictors be sub-probability distributions $P(\cdot\mid\theta)$ and
adopt the logarithmic loss function $f_X(\theta)=-\ln \frac{\d P}{\d\nu}(X|\theta)$ with respect to an appropriate base measure $\nu$ such as counting and Lebesgue measures.
Note that a number of important practical problems such as logistic regression and data compression can be handled with this framework.
With the logarithmic loss, the closed form of the luckiness minimax regret is given by~\cite{shtar1987universal,grunwald2007minimum} as
\begin{align}
    \lreg^\star(\gamma)=\ln \int e^{-m(f_X+\gamma)} \nu(\d X)\eqdef S(\gamma),
    \label{eq:shtarkov_complexity}
\end{align}
where $m$ denotes the minimum operator given by $m(f)=\inf_{\theta\in\RR^d}f(\theta)$.
We refer to the left-hand-side value as the \emph{Shtarkov complexity}.
Moreover, when all the distributions in $\Hcal$ are i.i.d.~regular distributions of $n$-sequences $X=(X_1,\ldots,X_n)$,
under some regularity conditions,
the celebrated asymptotic formula~\citep{rissanen1996fisher,grunwald2007minimum} is given by
\begin{align}
    S(\gamma)=\frac{d}2 \ln \frac{n}{2\pi} + \int \sqrt{\det I(\theta)}e^{-\gamma(\theta)}\d \theta+o(1),
    \label{eq:asymptotic_formula}
\end{align}
where $I(\theta)$ is the Fisher information matrix and $o(1)\to 0$ as $n\to \infty$.
More importantly, although the exact minimax-regret predictor achieving $S(\gamma)$ is still intractable, the asymptotic formula implies that
it is asymptotically achieved with 
the Bayesian predictor associated with the \emph{tilted Jeffreys prior} $\pi(\d\theta)\propto
\sqrt{\det I(\theta)}e^{-\gamma(\theta)}\d\theta$.

Here, our research questions are as follows:
First, {\bf (Q1)} \emph{How can we evaluate $S(\gamma)$ in modern high-dimensional contexts?}
In particular, the asymptotic formula~\eqref{eq:asymptotic_formula} does not withstand high-dimensional learning problems where $d$ increases as $n\to \infty$.
The exact evaluation of the Shtarkov complexity~\eqref{eq:shtarkov_complexity}, on the other hand, is often intractable due to the minimum operator inside the integral.
Second, {\bf (Q2)} \emph{How can we achieve the minimax regret with computationally feasible predictors?}
It is important to provide the counterpart of the tilted Jeffreys prior in order to make actual predictions.

Regarding the above questions, our contribution is summarized as follows:
\begin{itemize}
    \item We introduce the \emph{envelope complexity}, a non-asymptotic approximation of the Shtarkov complexity $S(\gamma)$ that allows us systematic computation of its upper bounds and predictors achieving these bounds.
    In particular, we show that the regret of the predictor is characterized with the smoothness. 
\item We demonstrate its usefulness by giving a Bayesian predictor that adaptively achieves the minimax regret within a factor of two over any high-dimensional smooth models under $\ell_1$-constraints $\norm{\theta}_1\le B$.
\end{itemize}



The rest of the paper is organized as follows:
In Section~\ref{sec:bmr}, we introduce the notion of Bayesian minimax regret
as an approximation of the minimax regret within the `feasible' set of predictors.
We then develop a complexity measure called \emph{envelope complexity}
in Section~\ref{sec:ec}
as a mathematical abstraction of the Bayesian minimax regret.
We also present a collection of techniques for bounding the envelope complexity to the Shtarkov complexity.
In Section~\ref{sec:st_prior}, we utilize the envelope complexity to construct a near-minimax Bayesian predictor under $\ell_1$-penalization, namely the spike-and-tails~(ST) prior.
We also show that it achieves the minimax rate over $\Hcal=\mysetinline{\theta\in\RR^d}{\norm{\theta}_1\le B}$ under high-dimensional asymptotics.
In Section~\ref{sec:visual}, we demonstrate numerical experiments
to visualize our theoretical results.
The discussion on these results in comparison to
the existing studies are given in Section~\ref{sec:discussion}.
Finally, we conclude the paper in Section~\ref{sec:conclusion}.

\section{Bayesian Minimax Regret}
\label{sec:bmr}

The minimax regret with logarithmic loss is given by the Shtarkov complexity $S(\gamma)$.
The computation of the Shtarkov complexity $S(\gamma)$ is often intractable if we consider practical models such as deep neural networks.
This is because the landscapes of loss functions $f\in\Fcal$ are complex as the models are, and hence their minimums $m(f)$ and the complexity, which is an integral over the function of $m(f)$, are not tractable.
Moreover, computations of the optimal predictor $h^\star$ are still often intractable
even if $S(\gamma)$ are given.
For instance, the minimax-regret prediction for Bernoulli models over $n$ outcomes cost $O(n2^n)$ time.
Of course there exist some special cases for which closed forms of $\hhat$ are given.
However, so far they are limited to exponential families.

One cause of this issue is that we seek for the best predictor $\hhat$ among all the possible predictors $\allh$, i.e., all probability distributions.
This is too general that it maybe not possible to compute $\hhat$ nor $\reg^\star(\gamma)$.
To avoid this difficulty, we narrow the set of feasible predictors $\allh$ to the Bayesian predictors.
Let $w\in\Mcal_+(\RR^d)$ be a positive measure over $\RR^d$,
which we may refer to as \emph{pre-prior}, and
let $\holine_w$ be the Bayesian predictor associated with the prior $\pi(\d \theta)\propto e^{-\gamma(\theta)}w(\d\theta)$.
Then we have
\begin{align}
    f_X(\holine_w)=\ln \frac{w\sbr{e^{-\gamma}}}{w\sbr{e^{-f_X-\gamma}}}\eqdef f_X(w),
    \label{eq:definition_bayesian_loss}
\end{align}
where $w\sbr{\cdot}$ denotes the integral operation with respect to $w(\d\theta)$.
Now, we consider the Bayesian (luckiness) minimax regret given by
\begin{align*}
    \lreg^\bayes(\gamma)
    &\eqdef\inf_{w\in\Mcal_+(\RR^d)}\lreg(w|\gamma),
    \\\lreg(w|\gamma)
    &\eqdef\sup_{X\in\Xcal} \cbr{f_X(w)-m\rbr{f_X+\gamma}}.
\end{align*}


One advantage of considering the Bayesian minimax regret is that,
given a measure $w$, one can compute $\holine_w$ analytically or numerically utilizing techniques developed in the literature of Bayesian inference. In particular, a number of sophisticated variants of Monte Carlo Markov chain~(MCMC) methods such as stochastic gradient Langevin Dynamics~\citep{welling2011bayesian} are developed for sampling $\theta$ from complex posteriors.

Note that their does exist a case
where the Bayesian minimax regret strictly differs
from the minimax regret.
See \cite{barron2014bayesian} for example.
%
It implies that
narrowing the range of predictors to Bayesian
may worsen the achievable worst-case regret.
However, as we will show shortly, the gap between
these minimax regrets can be controlled
with model $\gamma$.

\section{Envelope Complexity}
\label{sec:ec}
We have introduced the Bayesian minimax regret $\lreg^\bayes(\gamma)$.
In this section, we present a set representation of Bayesian minimax regret,
namely the \emph{envelope complexity} $C(\gamma, \Fcal)$.
Then, we show that the Shtarkov complexity is bounded by the envelope complexity and
the envelope complexity can be easily bounded
even if the models are complex.

\subsection{Set Representation of Bayesian Minimax Regret}
The envelope complexity is a simple mathematical abstraction of Bayesian minimax regret
and gives a fundamental basis for systematic computation of upper bounds on the (Bayesian) minimax regret.
Let $\Fcal$ be a set of continuous functions $f:\RR^d\to\RR$ which is not necessarily logarithmic.
Define the Bayesian envelope of $\Fcal$ as
\begin{align*}
    \Ecal(\Fcal)\eqdef \myset{w\in\Mcal_+(\RR^d)}{\forall f\in\Fcal,w\sbr{e^{-f+m(f)}}\ge 1},
\end{align*}
and define the envelope complexity as
\begin{align*}
    C(\gamma, \Fcal)\eqdef
    \inf_{w\in\Ecal(\Fcal)} \ln w\sbr{e^{-\gamma}}.
\end{align*}
Then, the envelope complexity characterizes Bayesian minimax regret.

\begin{theorem}[Set representation]
    \label{thm:set_repr_envelope}
    Let $\Fcal=\mysetinline{f_X+\gamma}{X\in\Xcal}$.
    Then, all measures in the envelope $w\in\Ecal(\Fcal)$ satisfies that
    \begin{align*}
        \lreg(w|\gamma)\le \ln w\sbr{e^{-\gamma}}.
    \end{align*}
    Moreover, we have
    \begin{align*}
        \lreg^\bayes(\gamma)=C\rbr{\gamma,\Fcal}.
    \end{align*}
\end{theorem}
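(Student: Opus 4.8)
The plan is to read off both assertions directly from the definitions of the Bayesian loss~\eqref{eq:definition_bayesian_loss}, the envelope $\Ecal(\Fcal)$, and the envelope complexity $C(\gamma,\Fcal)$; the one genuine idea is a rescaling trick needed for the reverse inequality in the second assertion.

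First I would prove the envelope bound. Fix $w\in\Ecal(\Fcal)$ and $X\in\Xcal$, and write $f=f_X+\gamma\in\Fcal$. Since $m(f)$ is a scalar, the membership condition $w\sbr{e^{-f+m(f)}}\ge1$ rearranges to $w\sbr{e^{-f_X-\gamma}}\ge e^{-m(f_X+\gamma)}$, i.e. $-\ln w\sbr{e^{-f_X-\gamma}}\le m(f_X+\gamma)$. Substituting into~\eqref{eq:definition_bayesian_loss},
\[
 f_X(w)-m(f_X+\gamma)=\ln w\sbr{e^{-\gamma}}-\ln w\sbr{e^{-f_X-\gamma}}-m(f_X+\gamma)\le\ln w\sbr{e^{-\gamma}},
\]
and taking the supremum over $X\in\Xcal$ gives $\lreg(w|\gamma)\le\ln w\sbr{e^{-\gamma}}$. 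Because $\Ecal(\Fcal)\subseteq\Mcal_+(\RR^d)$, taking infima over $w\in\Ecal(\Fcal)$ on both sides already yields $\lreg^\bayes(\gamma)\le C(\gamma,\Fcal)$.

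For the reverse inequality I would show that every $w\in\Mcal_+(\RR^d)$ can be rescaled into the envelope at cost exactly $\lreg(w|\gamma)$. Set $c\eqdef\sup_{X\in\Xcal}e^{-m(f_X+\gamma)}/w\sbr{e^{-f_X-\gamma}}$ and $w'\eqdef c\,w$. By construction $c$ is the least scalar for which $c\,w\sbr{e^{-(f_X+\gamma)+m(f_X+\gamma)}}\ge1$ holds for all $X$, so $w'\in\Ecal(\Fcal)$; and expanding $\ln c$ and using the definition of $f_X(w)$,
\[
 \ln w'\sbr{e^{-\gamma}}=\ln c+\ln w\sbr{e^{-\gamma}}=\sup_{X\in\Xcal}\rbr{f_X(w)-m(f_X+\gamma)}=\lreg(w|\gamma).
\]
Hence $C(\gamma,\Fcal)\le\ln w'\sbr{e^{-\gamma}}=\lreg(w|\gamma)$ for every $w$, and taking the infimum over $w$ gives $C(\gamma,\Fcal)\le\lreg^\bayes(\gamma)$; together with the previous step this is the claimed equality.

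The computations themselves are routine; the care goes into the rescaling step, both in checking that $c$ is precisely the right normalizer and in handling the degenerate cases where $c=\infty$ or $w\sbr{e^{-\gamma}}\in\{0,\infty\}$. When $\lreg(w|\gamma)=\infty$ the target inequality is vacuous, so it is enough to restrict to $\sigma$-finite pre-priors for which $w\sbr{e^{-\gamma}}$ and $w\sbr{e^{-f_X-\gamma}}$ are finite and positive and $c<\infty$ — a mild regularity restriction already implicit in the finiteness of $f_X(w)$ — which I would state explicitly at the outset.
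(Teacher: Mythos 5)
Your proof is correct and follows essentially the same route as the paper: the first inequality is the same direct use of the envelope condition, and your rescaling constant $c=\sup_{X}e^{-m(f_X+\gamma)}/w\sbr{e^{-f_X-\gamma}}$ is exactly the reciprocal of the paper's normalizer $c(w)=\inf_{f\in\Fcal}w\sbr{e^{-f+m(f)}}$, so your $w'=c\,w$ is the paper's $\wbar=w/c(w)$ and the identity $\ln w'\sbr{e^{-\gamma}}=\lreg(w|\gamma)$ is the same key computation. Your explicit remark about the degenerate cases ($c=\infty$ or vanishing/infinite integrals) is a harmless refinement of what the paper leaves implicit.
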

\begin{proof}
    Let $c(w)=\inf_{f\in\Fcal}w[e^{-f+m(f)}]$.
    Observe that
    \begin{align*}
        \ln \frac{w\sbr{e^{-\gamma}}}{c(w)}
      &=\sup_{f\in \Fcal}\cbr{ \ln \frac{w\sbr{e^{-\gamma}}}{w\sbr{e^{-f}}} - m(f)}
      \\&=\sup_{X\in \Xcal}
        \cbr{ \ln \frac{w\sbr{e^{-\gamma}}}{w\sbr{e^{-f_X-\gamma}}} - m(f_X+\gamma)}
        \\&\qquad (f=f_X+\gamma)
      \\&=\lreg(w|\gamma). 
        \\&\qquad (\because~\eqref{eq:definition_bayesian_loss})
    \end{align*}
    Then, since $c(w)\ge 1$ for all $w\in\Ecal(\Fcal)$,
    we have the first inequality.
    
    Note that $\wbar=w/c(w)\in \Ecal(\Fcal)$ for any $w\in\Mcal_+(\RR^d)$,
    and $\wbar\sbr{e^{-\gamma}}\le w\sbr{e^{-\gamma}}$ whenever $w\in\Ecal(\Fcal)$.
    Then we have
    \begin{align*}
        C(\gamma, \Fcal)
        &=\inf_{w\in\Mcal_+(\RR^d)} \ln \frac{w\sbr{e^{-\gamma}}}{c(w)}
      \\&=\inf_{w\in\Mcal_+(\RR^d)} \lreg(w|\gamma)
        &(\mathrm{the\ above\ equality})
      \\&=\lreg^\bayes(\gamma),
    \end{align*}
    yielding the second equality.
    This completes the proof.
\end{proof}


We have seen that the envelope complexity is equivalent to the Bayesian minimax regret.
Below, we present upper bounds of the Shtarkov complexity we put our basis on in the rest of the paper.

\begin{theorem}[Bounds on Shtarkov complexity]
    \label{thm:upper_lower_bounds_of_sc_via_ec}
    Let $\Fcal=\mysetinline{f_X+\gamma}{X\in\Xcal}$ where $f_X$ is logarithmic.
    Then, for all $w\in\Ecal(\Fcal)$,
    we have
    \begin{align*}
        S(\gamma) \le C(\gamma, \Fcal)\le  \ln w\sbr{e^{-\gamma}}.
    \end{align*}
\end{theorem}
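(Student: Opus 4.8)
The plan is to bound $S(\gamma)$ above by $C(\gamma,\Fcal)$; the second inequality $C(\gamma,\Fcal)\le \ln w[e^{-\gamma}]$ is immediate from the definition of $C(\gamma,\Fcal)$ as an infimum over $w\in\Ecal(\Fcal)$. So the work is in the first inequality, and the natural route is to recall the closed form $S(\gamma)=\ln\int e^{-m(f_X+\gamma)}\,\nu(\d X)$ from~\eqref{eq:shtarkov_complexity} and show that every $w\in\Ecal(\Fcal)$ dominates this quantity. First I would fix an arbitrary $w\in\Ecal(\Fcal)$ and, for each $X\in\Xcal$, invoke the envelope condition applied to $f=f_X+\gamma\in\Fcal$: by definition of $\Ecal(\Fcal)$ we have $w[e^{-(f_X+\gamma)+m(f_X+\gamma)}]\ge 1$, i.e.
\begin{align*}
    w\sbr{e^{-f_X-\gamma}}\ge e^{-m(f_X+\gamma)}.
\end{align*}

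Next I would integrate this pointwise bound in $X$ against the base measure $\nu$ and use Fubini/Tonelli (everything is nonnegative) to swap the order of integration:
\begin{align*}
    \int e^{-m(f_X+\gamma)}\,\nu(\d X)
    &\le \int w\sbr{e^{-f_X-\gamma}}\,\nu(\d X)
    = w\sbr{\int e^{-f_X-\gamma}\,\nu(\d X)}.
\end{align*}
Now the logarithmic structure enters: since $f_X(\theta)=-\ln\frac{\d P}{\d\nu}(X\mid\theta)$, we have $\int e^{-f_X(\theta)}\,\nu(\d X)=\int \frac{\d P}{\d\nu}(X\mid\theta)\,\nu(\d X)=P(\Xcal\mid\theta)\le 1$ because the predictors are sub-probability distributions. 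Hence the inner integral is at most $e^{-\gamma(\theta)}$, giving $w\sbr{\int e^{-f_X-\gamma}\,\nu(\d X)}\le w\sbr{e^{-\gamma}}$. Taking logarithms yields $S(\gamma)\le \ln w[e^{-\gamma}]$, and then taking the infimum over $w\in\Ecal(\Fcal)$ gives $S(\gamma)\le C(\gamma,\Fcal)$.

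The main obstacle — really the only nonroutine point — is the normalization step $\int e^{-f_X(\theta)}\,\nu(\d X)\le 1$: this is exactly where we must use that $f_X$ is logarithmic (the hypothesis singled out in the statement) and that the $P(\cdot\mid\theta)$ are sub-probability measures rather than arbitrary predictors. For a general $\Fcal$ this step fails and one only has the weaker $C(\gamma,\Fcal)\le\ln w[e^{-\gamma}]$ without the $S(\gamma)$ bound. I would also be slightly careful that Tonelli applies — the integrand $e^{-f_X(\theta)-\gamma(\theta)}$ is nonnegative and measurable jointly in $(X,\theta)$, so the exchange is justified without integrability hypotheses. Everything else is bookkeeping.
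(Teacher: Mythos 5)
Your proof is correct, but it takes a genuinely different route from the paper's. The paper argues operationally: by Theorem~\ref{thm:set_repr_envelope}, $C(\gamma,\Fcal)$ equals the Bayesian minimax regret $\lreg^\bayes(\gamma)$, and since the Bayes class $\cbr{\holine_w}$ is a subclass of all predictors $\allh$, shrinking the class over which the infimum is taken can only increase the minimax value, so $S(\gamma)=\lreg^\star(\gamma)\le\lreg^\bayes(\gamma)=C(\gamma,\Fcal)$; the second inequality is read off the definition of $C$, exactly as you do. You instead verify the first inequality by direct computation: the envelope condition gives $w\sbr{e^{-f_X-\gamma}}\ge e^{-m(f_X+\gamma)}$ pointwise in $X$, and integrating against $\nu$, exchanging the order by Tonelli, and invoking the sub-probability normalization $\int e^{-f_X(\theta)}\nu(\d X)\le 1$ yields $S(\gamma)\le \ln w\sbr{e^{-\gamma}}$ for every $w\in\Ecal(\Fcal)$, hence $S(\gamma)\le C(\gamma,\Fcal)$. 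What your version buys is self-containedness: it needs neither Theorem~\ref{thm:set_repr_envelope} nor the cited Shtarkov identity $\lreg^\star(\gamma)=S(\gamma)$, only the integral expression defining $S(\gamma)$, and it pinpoints exactly where the ``logarithmic'' hypothesis enters. Note also that your normalization step is the same fact that implicitly licenses the paper's class-shrinking argument --- it is precisely what certifies that the normalized Bayes mixture $\holine_w$ is itself a valid sub-probability predictor, i.e., a member of $\allh$. The paper's proof is shorter and carries the operational interpretation; yours is more elementary and makes the measure-theoretic mechanics explicit.
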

\begin{proof}
    The first inequality follows from that
    the envelope minimax regret is no less than the minimax regret,
    as the range of infimum is shrunk from $\allh$ to the Bayes class $\cbr{\holine_w }$.
    The second inequality is seen by that the definition of the envelope complexity.
    This completes the proof.
\end{proof}

%
\subsection{Useful Lemmas for Evaluating Envelope Complexity}

Next, we show several lemmas that highlight the computational advantage of the envelope complexity.
We start to show that the envelope complexity is easily evaluated with the surrogate relation.
We say a function $g$ is \emph{surrogate} of another function $f$ if and only if $f-m(f)\le g-m(g)$,
which is denoted by $f\preceq g$.
Moreover, if there is one-to-one correspondence between $g\in\Gcal$ and $f\in\Fcal$ such that $f\preceq g$,
then we may write $\Fcal\preceq \Gcal$.

\begin{lemma}[Monotonicity]
    \label{lem:monotonicity_ec}
    Let $\Fcal\preceq\Gcal'\subset \Gcal$.
    Then we have
    \begin{align*}
        .\Ecal(\Fcal)\supset \Ecal(\Gcal)
    \end{align*}
    and therefore
    \begin{align*}
        C(\gamma, \Fcal)\le C(\gamma, \Gcal).
    \end{align*}
\end{lemma}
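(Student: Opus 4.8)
The plan is to prove the two conclusions in sequence, deriving the inequality on envelope complexities directly from the set inclusion, which is the real content.

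First I would establish the inclusion $\Ecal(\Fcal)\supset\Ecal(\Gcal)$. It suffices to take an arbitrary $w\in\Ecal(\Gcal)$ and show $w\in\Ecal(\Fcal)$, i.e.\ that $w[e^{-f+m(f)}]\ge 1$ for every $f\in\Fcal$. Fix such an $f$. By the hypothesis $\Fcal\preceq\Gcal'$, there is a corresponding $g\in\Gcal'$ with $f\preceq g$, meaning $f-m(f)\le g-m(g)$ pointwise. Exponentiating the inequality $-f+m(f)\ge -g+m(g)$ and using monotonicity of $x\mapsto e^x$ gives $e^{-f+m(f)}\ge e^{-g+m(g)}$ pointwise, hence $w[e^{-f+m(f)}]\ge w[e^{-g+m(g)}]$ since $w$ is a positive measure. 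Now $g\in\Gcal'\subset\Gcal$, so $w\in\Ecal(\Gcal)$ forces $w[e^{-g+m(g)}]\ge 1$, and therefore $w[e^{-f+m(f)}]\ge 1$. As $f\in\Fcal$ was arbitrary, $w\in\Ecal(\Fcal)$, proving the inclusion.

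For the second claim, recall $C(\gamma,\Fcal)=\inf_{w\in\Ecal(\Fcal)}\ln w[e^{-\gamma}]$ and likewise for $\Gcal$. Since $\Ecal(\Gcal)\subset\Ecal(\Fcal)$, the infimum defining $C(\gamma,\Fcal)$ is taken over a larger set, so it can only be smaller; that is, $C(\gamma,\Fcal)\le C(\gamma,\Gcal)$. This finishes the proof.

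I do not anticipate a serious obstacle here — the argument is essentially unwinding the definitions. The one point requiring a little care is the role of $\Gcal'$: the surrogate correspondence is assumed between $\Fcal$ and the intermediate family $\Gcal'$, not all of $\Gcal$, so the proof must route through $g\in\Gcal'$ and then invoke $\Gcal'\subset\Gcal$ to apply the envelope constraint coming from membership in $\Ecal(\Gcal)$. A secondary subtlety is that the surrogate relation must be used in the correct direction (so that $e^{-f+m(f)}$ dominates $e^{-g+m(g)}$ and the lower bound $\ge 1$ transfers the right way); writing out the exponentiation step explicitly, as above, guards against a sign slip.
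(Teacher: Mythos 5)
Your argument is correct and is essentially the paper's proof: you use the same pointwise domination $e^{-f+m(f)}\ge e^{-g+m(g)}$ from the surrogate relation and the fact that enlarging $\Gcal'$ to $\Gcal$ only adds envelope constraints, merely fusing the paper's two-step chain $\Ecal(\Fcal)\supset\Ecal(\Gcal')\supset\Ecal(\Gcal)$ into a single verification, and then conclude by monotonicity of the infimum exactly as the paper does. No issues.
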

\begin{proof}
    Note that $e^{-f+m(f)}\ge e^{-g+m(g)}$ if $f\preceq g$,
    which means $\Ecal(\Fcal)\supset \Ecal(\Gcal')$.
    Also, as increasing the argument from $\Gcal'$ to $\Gcal$ just
    strengthen the predicate of the envelope, we have $\Ecal(\Gcal')\supset \Ecal(\Gcal)$.
    Therefore, we have
    \begin{align*}
        C(\gamma, \Fcal)
        &=
        \inf_{w\in\Ecal(\Fcal)}\ln w\sbr{e^{-\gamma}}
      \\&\le \inf_{w\in\Ecal(\Gcal')}\ln w\sbr{e^{-\gamma}}
        &\Ecal(\Fcal)\supset \Ecal(\Gcal')
      \\&\le \inf_{w\in\Ecal(\Gcal)}\ln w\sbr{e^{-\gamma}}
        &\Ecal(\Gcal')\supset \Ecal(\Gcal)
      \\&=C(\gamma,\Gcal).
    \end{align*}
\end{proof}
This is especially useful when the loss functions $\Fcal$ are complex but there exist simple surrogates $\Gcal$.
Consider any models such that the landscapes of the associated loss functions $f\in\Fcal$
are not fully understood and the evaluation of $m(f)$ is expensive.
It is impossible to check if $w$ is in the envelope, $w\in\Ecal(\Fcal)$,
and therefore Theorem~\ref{thm:upper_lower_bounds_of_sc_via_ec} cannot be used directly.
However, even in such cases, one can possibly find a surrogate class $\Gcal$ of $\Fcal$.
If the surrogate $\Gcal$ is simple enough for checking if $w\in\Ecal(\Gcal)$, it is possible to bound the envelope complexity utilizing Lemma~\ref{lem:monotonicity_ec} and Theorem~\ref{thm:upper_lower_bounds_of_sc_via_ec}.

In what follows, we consider the specific instance of the surrogate relation based on the smoothness.
A function $f:\RR^d\to \RR$ is $L$-\emph{upper smooth} if and only if, for all $\theta,\theta_0\in\RR^d$,
there exists $g\in\RR^d$ such that
\begin{align}
    f(\theta)\le f(\theta_0)+g^\top (\theta-\theta_0)+ \frac{L}{2}\norm{\theta-\theta_0}_2^2.
    \label{eq:upper_smooth}
\end{align}
Note that the upper smoothness is weaker than (Lipschitz) smoothness.
Thus, if $f$ is $L$-upper smooth and has at least one minima $\theta_0\in\arg m(f)$,
we can construct a simple quadratic surrogate of $f$, $\theta\mapsto \frac{L}{2}\norm{\theta-\theta_0}_2^2~(\succeq f)$.

Motivated by the smoothness assumption,
below we present more specific bounds for quadratic functions.
Let $\Qcal$ be the set of all quadratic functions with curvature one,
defined as $\Qcal=\mysetinline{\theta\mapsto\frac 12\norm{\theta-u}^2}{u\in\RR^d}$.
Moreover, for all sets of loss functions $\Fcal$ and penalty functions $\gamma:\RR\to \RRbar$,
we write $\Fcal_\gamma=\Fcal+\gamma=\mysetinline{f+\gamma}{f\in\Fcal}$.
Then, the envelope complexity of $\Fcal_\gamma$ is evaluated with that of $\Qcal_\gamma$.

\begin{lemma}[Bounds of smoothness]
    \label{lem:ec_bound_with_quadratic}
    Suppose that all $f\in\Fcal$ are $L$-upper smooth.
    Let $\varphi(\theta)=\sqrt{L}^{-1}\theta$ be the scaling function.
    Then we have
    \begin{align*}
        \Ecal(\Qcal_{\gamma\circ \varphi})\circ \varphi^{-1}
        \subset \Ecal(\Fcal_\gamma),
    \end{align*}
    and moreover,
    \begin{align*}
        C(\gamma, \Fcal_\gamma)
        \le
        C(\gamma\circ\varphi, \Qcal_{\gamma\circ\varphi}).
    \end{align*}
\end{lemma}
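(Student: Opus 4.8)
\emph{Proof strategy.} The plan is to transport each penalized loss $f+\gamma$ with $f\in\Fcal$ through the rescaling $\varphi(\theta)=\theta/\sqrt L$ --- under which the smoothness constant becomes one --- and there to dominate it, in the surrogate order $\preceq$, by a curvature-one quadratic; a monotonicity argument in the spirit of Lemma~\ref{lem:monotonicity_ec} then produces the envelope inclusion, and a change of variables turns it into the stated bound on $C$. Two elementary facts about $\varphi$ will be used repeatedly. First, if $f$ is $L$-upper smooth then $f\circ\varphi$ is $1$-upper smooth: applying \eqref{eq:upper_smooth} to $f$ at the points $\varphi(\theta)$ and $\varphi(\theta_0)$ and using $\varphi(\theta)-\varphi(\theta_0)=(\theta-\theta_0)/\sqrt L$, the term $\tfrac L2\norm{\varphi(\theta)-\varphi(\theta_0)}^2$ collapses to $\tfrac12\norm{\theta-\theta_0}^2$ while the linear term is merely rescaled. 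Second, since $\varphi$ is a bijection of $\RR^d$ we have $m((f+\gamma)\circ\varphi)=m(f+\gamma)$, and for every $w\in\Mcal_+(\RR^d)$ the image measure $w\circ\varphi^{-1}$ satisfies $(w\circ\varphi^{-1})\sbr{h}=w\sbr{h\circ\varphi}$.

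The heart of the argument is the quadratic surrogate, and I expect it to be the only real obstacle. Fix $f\in\Fcal$ and write $\tilde f=f\circ\varphi$, $\tilde\gamma=\gamma\circ\varphi$. Let $\theta^\star$ attain $m(\tilde f+\tilde\gamma)$ (or come within $\varepsilon$ of it). I would then apply $1$-upper smoothness of $\tilde f$ with reference point $\theta^\star$ and complete the square: this yields $u\in\RR^d$ and a constant $c$ such that $\tilde f\le c+q$ pointwise, where $q(\theta)=\tfrac12\norm{\theta-u}^2\in\Qcal$ and, crucially, $c+q(\theta^\star)=\tilde f(\theta^\star)$, i.e.\ the quadratic $c+q$ touches $\tilde f$ exactly at $\theta^\star$. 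The pointwise bound $\tilde f+\tilde\gamma\le c+q+\tilde\gamma$ gives $m(\tilde f+\tilde\gamma)\le m(c+q+\tilde\gamma)$; evaluating at $\theta^\star$ gives the reverse, $m(c+q+\tilde\gamma)\le c+q(\theta^\star)+\tilde\gamma(\theta^\star)=\tilde f(\theta^\star)+\tilde\gamma(\theta^\star)=m(\tilde f+\tilde\gamma)$. Hence the two minima coincide, and subtracting them from the pointwise inequality upgrades it to the surrogate relation $\tilde f+\tilde\gamma\preceq c+q+\tilde\gamma$. Since an additive constant cancels in $e^{-(\cdot)+m(\cdot)}$, this is, as far as the envelope is concerned, the relation $\tilde f+\tilde\gamma\preceq q+\tilde\gamma$ with $q\in\Qcal$ (an $\varepsilon$-slack, if present, only multiplies subsequent estimates by $e^{-\varepsilon}\to 1$).

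With this in hand I would assemble both claims. Take $w\in\Ecal(\Qcal_{\gamma\circ\varphi})$ and any $f\in\Fcal$; with the associated $q\in\Qcal$, the relation $\tilde f+\tilde\gamma\preceq q+\tilde\gamma$ gives $e^{-(\tilde f+\tilde\gamma)+m(\tilde f+\tilde\gamma)}\ge e^{-(q+\tilde\gamma)+m(q+\tilde\gamma)}$ pointwise, so $w\sbr{e^{-(\tilde f+\tilde\gamma)+m(\tilde f+\tilde\gamma)}}\ge w\sbr{e^{-(q+\tilde\gamma)+m(q+\tilde\gamma)}}\ge 1$. By the second fact above the left-hand side equals $(w\circ\varphi^{-1})\sbr{e^{-(f+\gamma)+m(f+\gamma)}}$, so this quantity is $\ge 1$ for every $f\in\Fcal$; that is, $w\circ\varphi^{-1}\in\Ecal(\Fcal_\gamma)$, which is exactly the inclusion $\Ecal(\Qcal_{\gamma\circ\varphi})\circ\varphi^{-1}\subset\Ecal(\Fcal_\gamma)$. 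The complexity bound then follows because this inclusion enlarges the domain of the infimum defining $C(\gamma,\Fcal_\gamma)$: $C(\gamma,\Fcal_\gamma)\le\inf_{w\in\Ecal(\Qcal_{\gamma\circ\varphi})}\ln(w\circ\varphi^{-1})\sbr{e^{-\gamma}}=\inf_{w\in\Ecal(\Qcal_{\gamma\circ\varphi})}\ln w\sbr{e^{-\gamma\circ\varphi}}=C(\gamma\circ\varphi,\Qcal_{\gamma\circ\varphi})$, again using the second fact with $h=e^{-\gamma}$.

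As noted, the subtle point is the anchoring in the surrogate step: centering the quadratic at a minimizer of $\tilde f$ alone would \emph{not} suffice, since then $\tilde f+\tilde\gamma$ and that quadratic-plus-$\tilde\gamma$ need not share a minimum value, and without equality of the minima the pointwise domination does not upgrade to $\preceq$ (which subtracts off the two, in general different, minima). Anchoring at a minimizer of $\tilde f+\tilde\gamma$ is precisely what forces the minima to agree. Everything else --- handling a non-attained minimum by $\varepsilon$-approximation, and the cancellation of additive constants in $e^{-(\cdot)+m(\cdot)}$ --- is routine bookkeeping.
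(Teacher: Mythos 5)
Your proof is correct and follows essentially the same route as the paper's: rescale by $\varphi$ so the losses become $1$-upper smooth, dominate each penalized loss by a curvature-one quadratic in the surrogate order, deduce the envelope inclusion $\Ecal(\Qcal_{\gamma\circ\varphi})\circ\varphi^{-1}\subset\Ecal(\Fcal_\gamma)$, and convert it into the complexity bound via the change of variables $(w\circ\varphi^{-1})\sbr{e^{-\gamma}}=w\sbr{e^{-\gamma\circ\varphi}}$. The only place you go beyond the paper is the surrogate step: the paper asserts $\Fcal_\gamma\preceq(\Qcal\circ\varphi^{-1})_\gamma$ in one line, whereas you correctly observe that pointwise domination does not upgrade to $\preceq$ once $\gamma$ is added unless the minima agree, and your anchoring of the quadratic (upper smoothness applied at a minimizer of the penalized loss, with the linear term absorbed by completing the square, plus the $\varepsilon$-argument when the infimum is not attained) is precisely the detail needed to justify that assertion.
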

\begin{proof}
    Note that $\Fcal_\gamma\preceq (L\Qcal)_\gamma=(\Qcal\circ \varphi^{-1})_\gamma$ since $\Fcal$ is a set of $L$-upper smooth functions.
    Observe that, for all $\Fcal$,
    \begin{align*}
        \Ecal(\Fcal\circ\varphi)
        &=\myset{w}{w\sbr{e^{-f\circ\varphi-m(f\circ\varphi)}}\ge1,\;\forall f\in\Fcal }
        \\&=\myset{w}{w\circ \varphi^{-1}\sbr{e^{-f-m(f)}}\ge1,\;\forall f\in\Fcal }
        \\&=\myset{\wtil\circ \varphi}{\wtil\sbr{e^{-f-m(f)}}\ge1,\;\forall f\in\Fcal }
        \\&=\Ecal(\Fcal)\circ \varphi,
    \end{align*}
    where $w$ and $\wtil$ range over $\Mcal_+(\RR^d)$.
    Thus, by Lemma~\ref{lem:monotonicity_ec},
    we have $\Ecal(\Fcal_\gamma)\supset \Ecal((\Qcal\circ\varphi^{-1})_\gamma)=\Ecal(\Qcal_{\gamma\circ\varphi})\circ\varphi^{-1}$.
    This proves the inclusion.
    Now we also have
    \begin{align*}
        C(\gamma, \Fcal_\gamma)
        &=\inf_{w\in\Ecal(\Fcal_\gamma)}\ln w\sbr{e^{-\gamma}}
        \\&\le\inf_{w\in\Ecal(\Qcal_{\gamma\circ\varphi})\circ\varphi^{-1}}\ln w\sbr{e^{-\gamma}}
        \\&=\inf_{w\in\Ecal(\Qcal_{\gamma\circ\varphi})}\ln w\circ\varphi^{-1}\sbr{e^{-\gamma}}
        \\&=\inf_{w\in\Ecal(\Qcal_{\gamma\circ\varphi})}\ln w\sbr{e^{-\gamma\circ\varphi}}
        \\&=C(\gamma\circ \varphi, \Qcal_{\gamma\circ\varphi}),
    \end{align*}
    which yields the inequality.
    %
\end{proof}

This lemma shows that, as long as we consider the envelope complexity of of upper smooth functions $\Fcal$,
it suffices for bounding above them to evaluate the envelope complexity of penalized quadratic functions $\Qcal_\gamma$.

Further, according to the lemma below, we can restrict ourselves to one-dimensional parametric models w.l.o.g.~if the penalty functions $\gamma$ is separable.
Here, $\gamma$ is said to be separable if and only if
it can be written in the form of $\gamma(\theta)=\sum_{j=1}^d \gamma_j(\theta_j)$.

\begin{lemma}[Separability]
    \label{lem:ec_separability}
    Suppose that $\gamma$ is separable.
    Then, the envelope complexity of $\Qcal_\gamma$ is bounded by a separable function, i.e.,
    \begin{align*}
        C(\gamma, \Qcal_\gamma)&\le \sum_{j=1}^d C(\gamma_j, \Qcal^{1}_{\gamma_j}),
    \end{align*}
    where $\Qcal^{1}$ is the set of normalized one-dimensional quadratic functions with curvature one,
    $\Qcal^1=\mysetinline{x(\in\RR)\mapsto \frac12 (x-u)^2}{u\in\RR}$.
\end{lemma}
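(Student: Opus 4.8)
The plan is to build a near-optimal pre-prior for $\Qcal_\gamma$ by taking a \emph{product} of near-optimal one-dimensional pre-priors for the classes $\Qcal^1_{\gamma_j}$, exploiting that every member of $\Qcal_\gamma$ is coordinate-wise separable whenever $\gamma$ is.

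First I would record the elementary structural fact underlying everything: a generic element of $\Qcal_\gamma$ has the form $f(\theta)=\frac12\norm{\theta-u}_2^2+\gamma(\theta)$ for some $u=(u_1,\dots,u_d)$, and since $\gamma(\theta)=\sum_j\gamma_j(\theta_j)$ this splits as $f(\theta)=\sum_{j=1}^d f_j(\theta_j)$ with $f_j(x)=\frac12(x-u_j)^2+\gamma_j(x)\in\Qcal^1_{\gamma_j}$. Hence $m(f)=\sum_j m(f_j)$ and $e^{-f+m(f)}$ factorizes over coordinates as $\prod_j e^{-f_j(\theta_j)+m(f_j)}$; the same separability holds for $e^{-\gamma}$.

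Next, given arbitrary $w_j\in\Ecal(\Qcal^1_{\gamma_j})$ for each $j$, I would set $w=\bigotimes_{j=1}^d w_j$ and check $w\in\Ecal(\Qcal_\gamma)$: by Tonelli (all integrands nonnegative) and the factorization above, $w\sbr{e^{-f+m(f)}}=\prod_j w_j\sbr{e^{-f_j+m(f_j)}}\ge1$ for every $f\in\Qcal_\gamma$, because each $f_j\in\Qcal^1_{\gamma_j}$ and $w_j$ lies in the corresponding envelope. Moreover $\ln w\sbr{e^{-\gamma}}=\sum_j\ln w_j\sbr{e^{-\gamma_j}}$. Taking the infimum over the $w_j$ independently (the summands involve disjoint variables) yields
\begin{align*}
    C(\gamma,\Qcal_\gamma)
    &\le \inf_{w_1,\dots,w_d}\sum_{j=1}^d\ln w_j\sbr{e^{-\gamma_j}}
    \\&= \sum_{j=1}^d C(\gamma_j,\Qcal^1_{\gamma_j}),
\end{align*}
which is the claim, since the product measures form a subset of $\Ecal(\Qcal_\gamma)$.

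The argument is short; the only points needing care are (i) the factorization of the integral over a product measure, i.e., tacitly restricting the infimum to $\sigma$-finite pre-priors so that Tonelli applies---harmless, as a non-$\sigma$-finite $w$ has $w\sbr{e^{-\gamma}}=\infty$ and so does not affect the infimum---and (ii) the identity $m(f)=\sum_j m(f_j)$, which is literal once each $\gamma_j$ is bounded below (as it is for the $\ell_1$-type penalties used later), ensuring each $m(f_j)$ is finite. If some $C(\gamma_j,\Qcal^1_{\gamma_j})=+\infty$ the inequality is trivially true, so there is nothing more to verify.
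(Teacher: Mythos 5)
Your argument is correct and is essentially the paper's own proof: you restrict the infimum to product measures $w=\bigotimes_j w_j$ with $w_j\in\Ecal(\Qcal^1_{\gamma_j})$, verify via the coordinate-wise factorization of $e^{-f+m(f)}$ that such products lie in $\Ecal(\Qcal_\gamma)$, and then factor $\ln w\sbr{e^{-\gamma}}$ into the sum of one-dimensional terms. The extra remarks on Tonelli, $\sigma$-finiteness, and $m(f)=\sum_j m(f_j)$ are harmless added care that the paper leaves implicit.
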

\begin{proof}
    Note that all $f\in\Qcal_\gamma$ is separable, i.e., $f(\theta)=\sum_{j=1}^d f_j(\theta_j)$ where $f_j\in \Qcal^1_{\gamma_j}$ and $\gamma(\theta)=\sum_{j=1}^d \gamma_j(\theta_j)$.
    Let $\Ecal^d=\Ecal(\Qcal^1_{\gamma_1})\otimes\cdots \otimes\Ecal(\Qcal^1_{\gamma_d})$.
    Then we have
    \begin{align*}
    C(\gamma, \Qcal_\gamma)
    &=\inf_{w\in\Ecal(\Qcal_\gamma)} \ln w[e^{-\gamma}]
    \\
    &\le \inf_{w\in\Ecal^d} \ln w[e^{-\gamma}]
    & \Ecal^d\subset \Ecal(\Qcal_\gamma)
    \\
    &= \sum_{j=1}^d\inf_{w_j\in\Ecal(\Qcal^1_{\gamma_j})} \ln w_j[e^{-\gamma_j}]
    \\
    &= \sum_{j=1}^d C(\gamma_j, \Qcal^1_{\gamma_j})
    .
    \end{align*}
\end{proof}

\paragraph{Summary}
We have defined the Bayesian envelope and envelope complexity.
The envelope complexity $C(\gamma, \Fcal)$ is equal to the Bayesian minimax regret if $\Fcal$ is the set of penalized logarithmic loss functions.
Any measures $w$ in the Bayesian envelope $\Ecal(\Fcal)$ can be utilized for bounding the Shtarkov complexity through the envelope complexity.
Most importantly, the envelope complexity satisfies some useful properties such as
monotonicity, parametrization invariance and separability.
Specifically, the monotonicity differentiate the envelope complexity from the Shtarkov complexity.

\section{The Spike-and-Tails Prior for High-Dimensional Prediction}
\label{sec:st_prior}
We leverage the envelope complexity to give a Bayesian predictor closely achieving $\lreg^\star(\gamma)$ where $\gamma(\theta)=\lambda\norm{\theta}_1$, namely, the spike-and-tails prior.
Moreover, the predictor is shown to be also approximately minimax without luckiness where $e^n\ge d/\sqrt{n}\to\infty$.

\subsection{Envelope Complexity for $\ell_1$-Penalties}
Let $\gamma$ be the weighted $\ell_1$-norm given by
\begin{align}
    \gamma(\theta)=\lambda \norm{\theta}_1,
    \label{eq:weighted_l1_penalty}
\end{align}
where $\lambda>0$.
Let $\pi_\lambda$ be the spike-and-tails~(ST) prior over $\RR^d$ given by
\begin{align}
    &\pi^\st_\lambda(\d \theta) \propto e^{-\lambda\norm{\theta}_1}
    \prod_{j=1}^d w_\lambda^\st(\d\theta_j),
    \\&w_\lambda^\st(\d x) =
        \delta_0(\d x)+\frac{e^{\lambda^2/2}}{\lambda^2e}
        \1\cbr{\abs{x}\ge \lambda}\d x,
    \label{eq:prior_l1}
\end{align}
where $\delta_{t}$ denotes Kronecker's delta measure at $t$.
We call it the spike-and-tails prior
because it consists of a delta measure~(spike) and two exponential distributions~(tails)
as shown in Figure~\ref{fig:st_prior}.

Then, envelope complexities for quadratic loss functions can be bounded as follows.

\begin{figure}[htbp]
    \centering
    \includegraphics[width=3in]{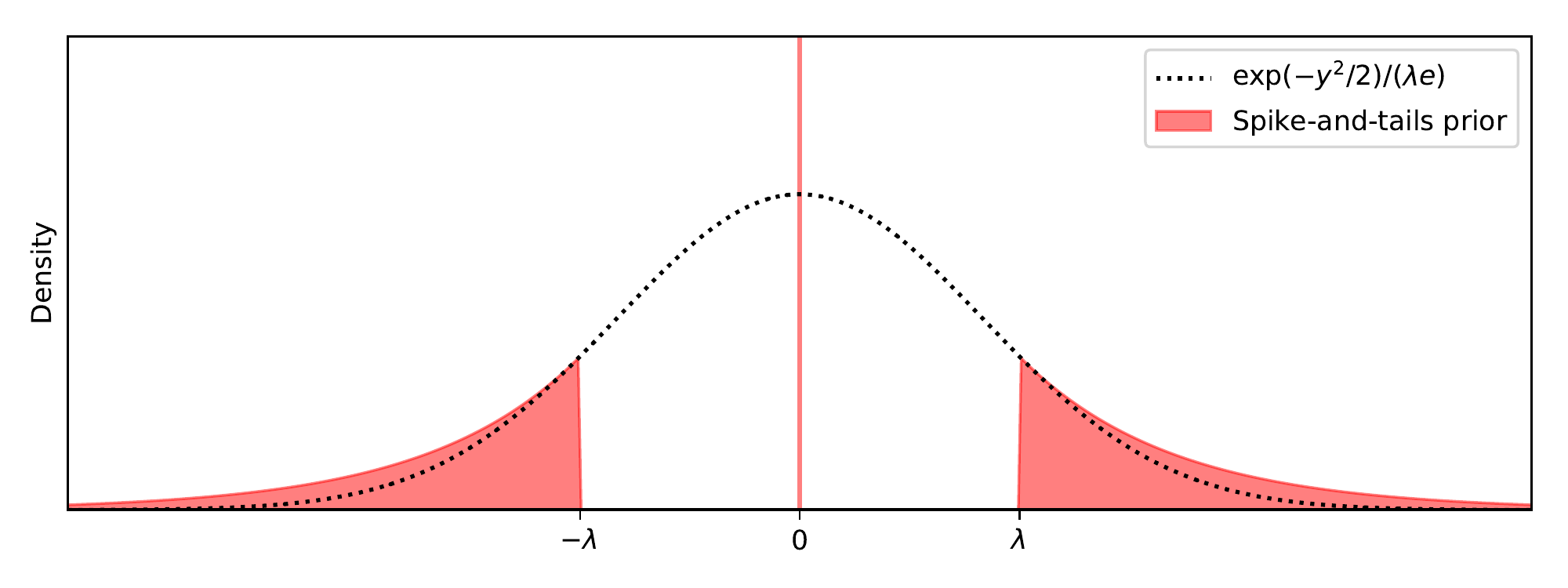}
    \caption{Density of the spike-and-tails~(ST) prior}
    \label{fig:st_prior}
\end{figure}

\begin{lemma}[Sharp bound on envelope complexity]
    \label{lem:ec_quad_l1}
    Take $\gamma$ as given by~\eqref{eq:weighted_l1_penalty}.
    Then, we have $w_{\lambda}^\st\in\Ecal(\Qcal_\gamma)$ and
    \begin{align*}
        d\ln \rbr{1+\frac{e^{-\lambda^2/2}}{\lambda^3(c+o(1))}}
        &\le 
        C(\gamma, \Qcal_\gamma)\le \ln w_\lambda^\st\sbr{e^{-\gamma}}
        \\&=
        d \ln \rbr{1+\frac{2e^{-\lambda^2/2}}{\lambda^2e}}
    \end{align*}
    for some constant $c$,
    where $o(1)\to 0$ as $\lambda\to \infty$.
\end{lemma}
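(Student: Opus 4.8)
The plan is to verify the three claims in order: (i) the membership $w_\lambda^\st\in\Ecal(\Qcal_\gamma)$, (ii) the exact evaluation of $\ln w_\lambda^\st[e^{-\gamma}]$, which then gives the upper bound on $C(\gamma,\Qcal_\gamma)$ via Theorem~\ref{thm:upper_lower_bounds_of_sc_via_ec}, and (iii) the matching lower bound. By Lemma~\ref{lem:ec_separability} and the product structure of both $\gamma$ and $w_\lambda^\st$, all three reduce to one-dimensional computations over $\Qcal^1_{\gamma_j}$ with $\gamma_j(x)=\lambda|x|$, so throughout I would work with the scalar pre-prior $w_\lambda^\st(\d x)=\delta_0(\d x)+\frac{e^{\lambda^2/2}}{\lambda^2 e}\1\{|x|\ge\lambda\}\,\d x$.

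For (i): a function $f\in\Qcal^1_{\gamma}$ has the form $f(x)=\frac12(x-u)^2+\lambda|x|$ for some $u\in\RR$. I would compute $m(f)=\inf_x f(x)$ explicitly — by convexity the minimizer is the soft-thresholding $x^\star=\sgn(u)(|u|-\lambda)_+$ — and then check the envelope inequality $w_\lambda^\st[e^{-f+m(f)}]\ge 1$. The delta at $0$ already contributes $e^{-f(0)+m(f)}=e^{-\lambda\cdot 0 - u^2/2 + m(f)}$... more usefully, when $|u|\le\lambda$ one has $m(f)=f(0)=u^2/2$, so the spike alone contributes exactly $1$ and the inequality holds trivially. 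The delicate regime is $|u|>\lambda$, where $m(f)=\frac12 u^2 - \frac12(|u|-\lambda)^2 + \lambda|u| - $ (recompute carefully) $= |u|\lambda - \lambda^2/2$, and here the spike contributes $e^{-u^2/2 + |u|\lambda - \lambda^2/2}=e^{-(|u|-\lambda)^2/2}<1$; the deficit must be made up by the tail integral $\frac{e^{\lambda^2/2}}{\lambda^2 e}\int_{|x|\ge\lambda}e^{-\frac12(x-u)^2 - \lambda|x| + m(f)}\,\d x$. I expect this is exactly where the constants $e^{\lambda^2/2}/(\lambda^2 e)$ and the threshold $\lambda$ were reverse-engineered to make the bound tight, so the verification should close — likely by a Gaussian-tail estimate of the form $\int_\lambda^\infty e^{-\frac12(x-u)^2}\d x \ge$ something times $e^{-\frac12(\lambda-u)^2}/\lambda$ uniformly in $u\ge\lambda$ — and I would present it as a one-line lemma on the scalar integral.

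For (ii): this is a direct computation. $w_\lambda^\st[e^{-\lambda|x|}] = e^{-\lambda\cdot 0} + \frac{e^{\lambda^2/2}}{\lambda^2 e}\int_{|x|\ge\lambda}e^{-\lambda|x|}\,\d x = 1 + \frac{e^{\lambda^2/2}}{\lambda^2 e}\cdot\frac{2e^{-\lambda^2}}{\lambda} = 1 + \frac{2e^{-\lambda^2/2}}{\lambda^3 e}$. (I should double-check the stated formula $1+\frac{2e^{-\lambda^2/2}}{\lambda^2 e}$ against this; if there is a $\lambda$-power discrepancy it is a typo in the lemma statement, and I would flag which version the subsequent arguments actually use.) Raising to the $d$-th power and taking logs gives the displayed upper bound, and combining with Theorem~\ref{thm:upper_lower_bounds_of_sc_via_ec} plus the membership from (i) yields $C(\gamma,\Qcal_\gamma)\le d\ln(1+\cdots)$.

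For (iii), the lower bound: here I would invoke Theorem~\ref{thm:set_repr_envelope}, $C(\gamma,\Qcal_\gamma)=\lreg^\bayes(\gamma)\ge$ (minimax regret restricted suitably), but more concretely I would pick an adversarial one-parameter subfamily — the $u=0$ element $f_0(x)=\frac12 x^2$ together with a far-away $f_u$ with $|u|$ of order $\lambda$ — and lower-bound $\inf_w\sup\{\,\cdots\}$ by a two-point argument: any $w\in\Ecal(\Qcal^1_\gamma)$ must simultaneously satisfy the envelope constraint for $f_0$ (forcing mass near $0$) and for some $f_u$ with $|u|\gtrsim\lambda$ (forcing mass out in the tails), and $\ln w[e^{-\lambda|x|}]$ cannot be too small given both constraints. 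Optimizing the choice of $u$ (I expect $|u|\asymp\lambda$, giving the $e^{-\lambda^2/2}$ scale and an extra $1/\lambda$ from a Laplace/Gaussian normalization, hence $\lambda^{-3}$) produces $d\ln(1+e^{-\lambda^2/2}/(\lambda^3(c+o(1))))$. The main obstacle I anticipate is precisely this lower bound: getting the constant $c$ and the $\lambda^{-3}$ power right requires a careful asymptotic (Laplace-type) analysis of the worst-case integral, and showing the two-point family is asymptotically the hardest — the upper-bound side (i)–(ii) is essentially bookkeeping by comparison, but the lower bound needs a genuine optimization over all $w\in\Mcal_+(\RR)$ and a sharp Gaussian tail asymptotic to match the leading term.
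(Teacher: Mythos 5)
Your parts (i)--(ii) follow essentially the paper's route: reduce to $d=1$ by separability, let the spike handle $|u|\le\lambda$ (where $m(f_u)=u^2/2=f_u(0)$), let the tails handle $|u|>\lambda$, and evaluate $w_\lambda^\st[e^{-\gamma}]$ directly. Two remarks, though. First, the verification for $|u|>\lambda$ is only promised (``I expect this closes''); the paper closes it by a clean monotonicity argument, showing $A(x)=e^{-x^2/2}+c\int_{t-x}^\infty e^{-y^2/2}\,dy$ has $A(0)\ge 1$ and $A'(x)=ce^{-(t-x)^2/2}-xe^{-x^2/2}\ge 0$ once $c=t^{-1}e^{t^2/2-1}$, then sets $t=\lambda$. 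Your Mills-ratio plan can be made to work, but it must be carried out uniformly in $u$; the delicate regime is $u=\lambda+\Theta(1/\lambda)$, where the spike's deficit is $\Theta(\lambda^{-2})$. Second, the $\lambda$-power mismatch you flagged is real, but you located the typo in the wrong place: with the density exactly as printed, $e^{\lambda^2/2}/(\lambda^2 e)$, the tail contributes only $\Theta(\lambda^{-3})$ in that critical regime, so membership in $\Ecal(\Qcal_\gamma)$ actually fails for large $\lambda$ and one also gets $\ln w[e^{-\gamma}]=\ln(1+2e^{-\lambda^2/2}/(\lambda^3 e))$, as you computed. The paper's own verification (and Corollary~\ref{cor:ec_ell1}, Theorem~\ref{thm:regret_bound}) are consistent with tail height $e^{\lambda^2/2}/(\lambda e)$; the typo is in the displayed prior~\eqref{eq:prior_l1}, not in the lemma's formula.

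The genuine gap is the lower bound, which is also where you diverge from the paper. The paper never optimizes over $w$: it uses Theorem~\ref{thm:upper_lower_bounds_of_sc_via_ec} together with monotonicity to write $S(\gamma)\le C(\gamma,\Fcal_\gamma)\le C(\gamma,\Qcal_\gamma)$ for $\Fcal$ the standard normal location model, and then computes the Shtarkov integral in closed form, per coordinate $\ln\sbr{2\Phi(\lambda)-1+\sqrt{2/\pi}\,e^{-\lambda^2/2}/\lambda}$, extracting the $e^{-\lambda^2/2}/\lambda^3$ term via Komatu's Mills-ratio inequality (Lemma~\ref{lem:lower_bound_sc_std_norm_loc}). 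Your two-point plan has the right shape --- and note you do \emph{not} need to show the two-point family is ``asymptotically the hardest'': dropping constraints only decreases the infimum, so any dual certificate for the two-constraint problem already lower-bounds $C(\gamma,\Qcal_\gamma)$ --- but the quantitative step you defer (``a genuine optimization over all $w$ and a sharp Gaussian tail asymptotic'') is precisely the content of the claim, so as written the lower bound is not established. If you did push the two-point certificate through (dual weights $\alpha$ on $f_0$ and $\beta$ on $f_u$ with $u=\lambda+\Theta(1/\lambda)$), you would in fact obtain a lower bound of order $e^{-\lambda^2/2}/\lambda^2$ per coordinate, slightly stronger than the paper's; but the paper's route is much lighter, since it converts the entire optimization into a single explicit one-dimensional Gaussian integral.
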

\begin{proof}
    Consider the logarithmic loss functions of the $d$-dimensional standard normal location model,
    given by $f_X(\theta)=\frac12\norm{X-\theta}_2^2+\frac{d}2\ln 2\pi,~X\in\Xcal=\RR^d$ and
    let $\Fcal=\mysetinline{f_X}{X\in\RR^d}$.
    Note that $\Fcal\preceq \Qcal$.
    Then, the lower bound follows from Lemma~\ref{lem:lower_bound_sc_std_norm_loc} in Section~\ref{sec:lower_bound_sc_std_norm_loc}.
    with $S(\gamma)\le C(\gamma, \Fcal_\gamma)\le C(\gamma, \Qcal_\gamma)$.
    
    Note that $\gamma$ is separable and by Lemma~\ref{lem:ec_bound_with_quadratic}, we restrict ourselves to the case of $d=1$.
    Let $c$ and $t$ be positive real numbers.
    Let $w=\delta+cU$ be a measure over the real line,
    where $\delta$ denotes the delta measure and
    $U$ denotes the Lebesgue measures restricted to $[-\lambda, \lambda]^c=\RR\setminus [-t, t]$.
    That is, we have $w(E)=\1_{0\in E}+c\abs{E\setminus[-t, t]}$
    for measurable sets $E\subset \RR$.
    Then we have
    \begin{align}
        \ln w\sbr{e^{-\gamma}}=\ln \rbr{1 + \frac{2c}{\lambda}e^{-t\lambda}}.
        \label{eq:tmp_1}
    \end{align}
    We want to minimize \eqref{eq:tmp_1} with respect to $w\in\Ecal(\Qcal_\gamma)$.
    Let $f_u(\theta)=\frac12 \rbr{\theta-u}^2+\lambda \abs{\theta}$.
    Then we have $m(f_u)=\frac12 u^2$ if $\abs{u}\le \lambda$,
    and $m(f_u)=\lambda\abs{u}-\frac12 \lambda^2$ otherwise.
    It suffices for $c$ and $t$ to have $w\sbr{e^{-f_u}}\ge e^{-m(f_u)}$ for all $u\in \RR$.
    Here, we only care about the case of $u\ge \lambda$
    since it is symmetric with respect to $u$ and
    trivially we have $w\sbr{e^{-f_u}}\ge \delta\sbr{e^{-f_u}}\ge e^{-m(f_u)}$ for all $u\in[-\lambda, \lambda]$.
    Now, for $x=u-\lambda\ge 0$, we have
    \begin{align*}
        w\sbr{e^{-f_u}}
        &= e^{-\frac12u^2}
        + ce^{-t\lambda}
        \rbr{\int_{-\infty}^{-t}+\int_t^\infty} 
        e^{-\frac12 (\theta-u)^2} d\theta
        \\
        &\ge e^{-\frac{1}2u^2}
        + ce^{-t\lambda}
        \int_t^\infty
        e^{-\frac12 (\theta-u)^2} d\theta
        \\
        &=
        e^{-m(f_u)}
        \rbr{
            e^{-\frac12 x^2}+
            c\int_{t-x}^\infty
            e^{-\frac12 y^2} dy
        }.
    \end{align*}
    Let $A(x)=e^{-\frac12 x^2}+c\int_{t-x}^\infty e^{-\frac12 y^2} dy$.
    Thus a sufficient condition for $w\in\Ecal(\Qcal_\gamma)$ is that
    $A'(x)=ce^{-\frac12 (t-x)^2}-xe^{-\frac12 x^2}\ge 0$, which is satisfied with $c=\frac1{t}\exp\rbr{\frac12 t^2-1}$.
    Finally, evaluating \eqref{eq:tmp_1} at $t=\lambda$ yields the ST pre-prior $w=w_\lambda^\st$.
    Therefore, we have $w_\lambda^\st\in\Ecal(\Qcal_\gamma)$
    and the upper bound is shown.
    The equality is a result of straightforward calculation of $\ln w\sbr{e^{-\gamma}}$.
%
\end{proof}
According to Lemma~\ref{lem:ec_quad_l1},
the ST prior bounds the envelope complexity in a quadratic rate as $\lambda\to \infty$.
The exponent, $-\frac12\lambda^2/2$, is optimally sharp since the lower  bound $C(\gamma,\Qcal_\gamma) =\Omega(d\exp\sbr{-\frac{1}2 \lambda^2 }/\lambda^3)$ has the same exponent.

This gives an upper bound on the envelope complexity for general smooth loss functions.
Let $\pi_{\lambda,L}^\st$ and $w_{\lambda,L}^\st$ be the scale-corrected ST (pre)~prior given by
\begin{align*}
    \pi_{\lambda,L}^\st(\d\theta)=\pi_{\lambda/\sqrt{L}}^\st(\sqrt{L}\d\theta),&&
    w_{\lambda,L}^\st(\d\theta)=w_{\lambda/\sqrt{L}}^\st(\sqrt{L}\d\theta).
\end{align*}
The following is a direct corollary of Lemma~\ref{lem:ec_bound_with_quadratic}, \ref{lem:ec_separability}, \ref{lem:ec_quad_l1} and \ref{lem:monotonicity_ec}.
\begin{corollary}
    \label{cor:ec_ell1}
    If all $f\in\Fcal$ is $L$-upper smooth with respect to $\theta$,
    and if $\gamma$ is given by \eqref{eq:weighted_l1_penalty},
    then $w_{\lambda,L}^\st\in\Ecal(\Fcal_\gamma)$ and therefore
    \begin{align*}
        C(\gamma, \Fcal_\gamma)
        \le \ln w_{\lambda,L}^\st\sbr{e^{-\gamma}}=
        d \ln \rbr{1+\frac{2L}{e\lambda^2}e^{-\frac{1}{2L}\lambda^2}}.
    \end{align*}
\end{corollary}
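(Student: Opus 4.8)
The statement is a mechanical composition of the four cited lemmas; the plan is to apply them in order, carefully tracking how the linear rescaling $\varphi(\theta)=\sqrt{L}^{-1}\theta$ transforms the $\ell_1$-penalty and the pre-prior.

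First I would invoke Lemma~\ref{lem:ec_bound_with_quadratic}. Since every $f\in\Fcal$ is $L$-upper smooth, it supplies both the inclusion $\Ecal(\Qcal_{\gamma\circ\varphi})\circ\varphi^{-1}\subset\Ecal(\Fcal_\gamma)$ and the inequality $C(\gamma,\Fcal_\gamma)\le C(\gamma\circ\varphi,\Qcal_{\gamma\circ\varphi})$. The key bookkeeping observation is that $\gamma\circ\varphi$ is again a weighted $\ell_1$-penalty of the form~\eqref{eq:weighted_l1_penalty}: $\gamma(\varphi(\theta))=\lambda\norm{\sqrt{L}^{-1}\theta}_1=(\lambda/\sqrt{L})\norm{\theta}_1$. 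Hence Lemma~\ref{lem:ec_quad_l1}, applied with $\lambda$ replaced by $\lambda/\sqrt{L}$ (its proof already calls on the separability of Lemma~\ref{lem:ec_separability} and the monotonicity of Lemma~\ref{lem:monotonicity_ec} to reduce to the one-dimensional sharp bound), gives $w_{\lambda/\sqrt{L}}^\st\in\Ecal(\Qcal_{\gamma\circ\varphi})$ together with
\begin{align*}
  C(\gamma\circ\varphi,\Qcal_{\gamma\circ\varphi})
  &\le\ln w_{\lambda/\sqrt{L}}^\st\sbr{e^{-\gamma\circ\varphi}}\\
  &=d\ln\rbr{1+\frac{2L}{e\lambda^2}e^{-\lambda^2/(2L)}},
\end{align*}
where the final equality is the elementary simplification of the closed form in Lemma~\ref{lem:ec_quad_l1} using $(\lambda/\sqrt{L})^2=\lambda^2/L$. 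Chaining this with $C(\gamma,\Fcal_\gamma)\le C(\gamma\circ\varphi,\Qcal_{\gamma\circ\varphi})$ proves the displayed bound on $C(\gamma,\Fcal_\gamma)$.

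It then remains to identify the extremal pre-prior. Pushing $w_{\lambda/\sqrt{L}}^\st\in\Ecal(\Qcal_{\gamma\circ\varphi})$ through the inclusion from Lemma~\ref{lem:ec_bound_with_quadratic} yields $w_{\lambda/\sqrt{L}}^\st\circ\varphi^{-1}\in\Ecal(\Fcal_\gamma)$; and the change of variables $\eta=\sqrt{L}\theta$ shows both that $w_{\lambda/\sqrt{L}}^\st\circ\varphi^{-1}$ is precisely the scale-corrected pre-prior $w_{\lambda,L}^\st$ (defined by $w_{\lambda,L}^\st(\d\theta)=w_{\lambda/\sqrt{L}}^\st(\sqrt{L}\,\d\theta)$) and that $w_{\lambda,L}^\st\sbr{e^{-\gamma}}=w_{\lambda/\sqrt{L}}^\st\sbr{e^{-\gamma\circ\varphi}}$, which confirms the stated value of $\ln w_{\lambda,L}^\st\sbr{e^{-\gamma}}$. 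There is no genuine obstacle here—all the substance is inside the four lemmas—so the only thing demanding care is exactly this rescaling bookkeeping: that $\gamma\circ\varphi$ is the $\ell_1$-penalty at parameter $\lambda/\sqrt{L}$, that the pushforward $w_{\lambda/\sqrt{L}}^\st\circ\varphi^{-1}$ coincides with the pre-prior the corollary names, and that the algebra collapses to $\tfrac{2L}{e\lambda^2}e^{-\lambda^2/(2L)}$.
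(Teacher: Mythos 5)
Your proof is correct and follows exactly the route the paper intends: the corollary is stated there as a direct consequence of Lemmas~\ref{lem:ec_bound_with_quadratic}, \ref{lem:ec_separability}, \ref{lem:ec_quad_l1} and \ref{lem:monotonicity_ec}, and your chain (smoothness reduction to $\Qcal_{\gamma\circ\varphi}$, recognizing $\gamma\circ\varphi$ as the $\ell_1$-penalty with weight $\lambda/\sqrt{L}$, invoking the sharp bound, then pushing the pre-prior through $\varphi^{-1}$ to identify $w_{\lambda,L}^\st$ and the closed form) is precisely that composition, with the rescaling bookkeeping made explicit. No gaps.
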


%
\subsection{Regret Bound with the ST Prior}

Now, we utilize Corollary~\ref{cor:ec_ell1} for bounding actual prediction performance of the ST prior.
Here we consider the scenario of the online-learning under $\ell_1$-constraint.

\paragraph{Setup}
Let $X^n=(X_1,\ldots,X_n)\in\Xcal^n$ be a sequence of outcomes.
Let $f_X$ be a logarithmic loss function such that
$\int e^{-f_{X}(\theta)}d\nu(X)\le 1$.
Then, the conditional Bayesian pre-posterior with respect to $w\in\Mcal_+(\RR^d)$ given $X^t~(0\le t\le n)$ is given by
\begin{align*}
    w(\d\theta|X^t)=w(\d\theta)\prod_{i=1}^t \exp\cbr{-f_{X_i}(\theta)}.
\end{align*}
The online regret of the predictor is defined as
\begin{align}
    &\reg_n(w|\Hcal)\eqdef \nonumber 
    \\&
    \sup_{X^n\in\Xcal^n,\theta^*\in\Hcal}\sum_{t=1}^n \cbr{
        f_{X_t}(w(\cdot|{X^{t-1}})) - f_{X_t}(\theta^*)
    }.
    \label{eq:online_regret}
\end{align}
Now, we can bound the online regret of the ST prior as follows.

\begin{theorem}[Adaptive minimaxity over $\ell_1$-balls]
    \label{thm:regret_bound}
    Suppose that $f_{X_i}$ are $L$-upper smooth and logarithmic.
    Let $\Hcal_B=\mysetinline{\theta\in\RR^d}{\norm{\theta}_1\le B}$.
    Take $\lambda=\sqrt{2Ln\ln (d/\sqrt{Ln})}$.
    Then, with $\omega(1)= \ln (d/\sqrt{n})=o(n)$, we have
    \begin{align*}
        \reg_n(w_{\lambda,Ln}^\st|\Hcal_B)
        \le  B\sqrt{2Ln\ln \frac{d}{\sqrt{Ln}}}(1+o(1))
    \end{align*}
    for all $B>0$.
    Moreover, this is adaptive minimax rate and not improvable more than a factor of two
    even if $B$ is fixed and non-Bayesian predictors are involved.
\end{theorem}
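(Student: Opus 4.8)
The plan is to split the claim into an achievability bound on $\reg_n(w_{\lambda,Ln}^\st\,|\,\Hcal_B)$ and a lower bound on the minimax online regret $\reg_n^\star(\Hcal_B)$ (the infimum of the quantity in~\eqref{eq:online_regret} over \emph{all} sequential prediction strategies, not just Bayesian ones), the two being within a factor $2(1+o(1))$.

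\textbf{Achievability.} Write $F_{X^n}=\sum_{i=1}^n f_{X_i}$ and $\gamma=\lambda\norm{\cdot}_1$. The sequential-mixture identity behind~\eqref{eq:definition_bayesian_loss}, applied to the $n$-fold loss, telescopes to $\sum_{t=1}^n f_{X_t}(w(\cdot\,|\,X^{t-1}))=\ln\bigl(w[e^{-\gamma}]/w[e^{-F_{X^n}-\gamma}]\bigr)$ for $w=w_{\lambda,Ln}^\st$. Each $f_{X_i}$ being $L$-upper smooth, $F_{X^n}$ is $Ln$-upper smooth, so Corollary~\ref{cor:ec_ell1} (applied to the class $\{F_{X^n}:X^n\in\Xcal^n\}$) gives $w_{\lambda,Ln}^\st\in\Ecal(\{F_{X^n}+\gamma\})$, hence $w[e^{-F_{X^n}-\gamma}]\ge e^{-m(F_{X^n}+\gamma)}$ for every $X^n$. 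Thus the cumulative Bayesian loss is at most $\ln w[e^{-\gamma}]+m(F_{X^n}+\gamma)$, and since $m(F_{X^n}+\gamma)\le F_{X^n}(\theta^\ast)+\gamma(\theta^\ast)\le F_{X^n}(\theta^\ast)+\lambda B$ for any $\theta^\ast\in\Hcal_B$, taking suprema yields $\reg_n(w_{\lambda,Ln}^\st\,|\,\Hcal_B)\le\ln w_{\lambda,Ln}^\st[e^{-\gamma}]+\lambda B$. With $\lambda=\sqrt{2Ln\ln(d/\sqrt{Ln})}$ one has $e^{-\lambda^2/(2Ln)}=\sqrt{Ln}/d$ and $2Ln/(e\lambda^2)=1/(e\ln(d/\sqrt{Ln}))$, so the closed form from Corollary~\ref{cor:ec_ell1} together with $\ln(1+x)\le x$ bounds $\ln w_{\lambda,Ln}^\st[e^{-\gamma}]$ by $\tfrac{\sqrt{Ln}}{e\ln(d/\sqrt{Ln})}$, which is $o(1)$ times $\lambda B=B\sqrt{2Ln\ln(d/\sqrt{Ln})}$ once $\ln(d/\sqrt n)\to\infty$; this gives the stated $(1+o(1))$ bound. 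The hypothesis $\ln(d/\sqrt n)=o(n)$ is needed only to make the bound sublinear so that these $o(1)$'s are meaningful.

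\textbf{Optimality.} Since $\reg_n(\cdot\,|\,\Hcal_B)\ge\reg_n^\star(\Hcal_B)$, it is enough to lower-bound $\reg_n^\star(\Hcal_B)$ for one $L$-upper-smooth logarithmic model. I would take $n$ i.i.d.\ draws from the Gaussian location family $\{N(\theta,L^{-1}I_d):\theta\in\Hcal_B\}$, whose losses $f_{X_i}(\theta)=\tfrac L2\norm{X_i-\theta}_2^2+\tfrac d2\ln\tfrac{2\pi}{L}$ are exactly $L$-upper smooth; then $\bar X_n$ is sufficient with $\bar X_n\mid\theta\sim N(\theta,\tau^2 I_d)$, $\tau^2=(Ln)^{-1}$. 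By the redundancy--capacity duality for logarithmic loss, $\reg_n^\star(\Hcal_B)\ge\sup_\mu I_\mu(\theta;X^n)=\sup_\mu I_\mu(\theta;\bar X_n)$. Choose $\mu$ coordinatewise spike-and-slab, $\theta_j\in\{0,\pm a\}$ with slab probability $p$, conditioned on $\norm{\theta}_1\le B$, with $a$ at the critical amplitude $a=(1+o(1))\sqrt{2\ln(d/\sqrt{Ln})}\,\tau$ and $p=(1-o(1))B/(ad)$, so that the three Gaussian components of each coordinate marginal barely overlap (overlap mass $\lesssim p$, negligible beside the per-coordinate label entropy $\asymp p\ln(1/p)$) and the expected $\ell_1$-budget $dap$ stays just under $B$ with high probability. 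Then $h(\bar X_n)$ decomposes coordinatewise into the label entropy plus $\tfrac d2\ln(2\pi e\tau^2)$ up to a relative $o(1)$, giving $I_\mu(\theta;\bar X_n)=dp\ln(1/p)(1-o(1))=\tfrac{B}{\sqrt2}\sqrt{Ln\ln(d/\sqrt{Ln})}\,(1-o(1))$ --- exactly half the achievability bound. (This hardness computation is essentially the one carried out in Section~\ref{sec:lower_bound_sc_std_norm_loc}; \cf Lemma~\ref{lem:lower_bound_sc_std_norm_loc}.)

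The main obstacle is the optimality part, and in particular its leading constant: every ``soft penalty'' comparison one can write --- e.g.\ $\reg_n^\star(\Hcal_B)\le S(\lambda\norm{\cdot}_1)+\lambda B$, as well as the achievability bound itself --- produces only \emph{upper} bounds on $\reg_n^\star(\Hcal_B)$, so the converse must come from an explicit hard prior, and extracting the clean factor of two requires placing the slab amplitude exactly at the detection threshold $a\asymp\sqrt{2\ln(d/\sqrt{Ln})}\,\tau$ (where the second-moment bound $I_\mu\le\tfrac d2\ln(1+a^2p/\tau^2)$ and the separated-modes lower bound $I_\mu\gtrsim dp\ln(1/p)$ both land on the rate $B\sqrt{Ln\ln(d/\sqrt{Ln})}$) and then controlling the $o(1)$ overlap and conditioning corrections in the entropy decomposition. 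The remaining pieces --- the telescoping identity, the envelope membership from Corollary~\ref{cor:ec_ell1}, and the substitution of $\lambda$ --- are routine.
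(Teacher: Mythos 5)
Your proposal is correct and follows essentially the same route as the paper: the achievability chain (cumulative loss is $Ln$-upper smooth, envelope membership from Corollary~\ref{cor:ec_ell1}, comparison of the hard constraint with $\gamma-\lambda B$, then substitution of $\lambda$) matches the paper's argument, and your lower bound via a per-coordinate three-point spike-and-slab prior at the critical amplitude for the Gaussian location model, restricted to the $\ell_1$-ball, is exactly the Donoho--Johnstone-style construction of Section~\ref{sec:proof_minimaxity} (Lemmas~\ref{lem:three_point_risk} and~\ref{lem:approx_lfp}), where the mutual information you invoke coincides with the Bayes log-loss risk that the paper bounds by the prior entropy and then corrects for the ball constraint via Chebyshev. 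One small misattribution: that hardness computation is carried out in Section~\ref{sec:proof_minimaxity}, not in Section~\ref{sec:lower_bound_sc_std_norm_loc}, whose Lemma~\ref{lem:lower_bound_sc_std_norm_loc} lower-bounds the Shtarkov complexity of the $\ell_1$-penalized problem --- which, as you yourself note, cannot yield the needed lower bound on $\reg_n^\star(\Hcal_B)$.
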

\begin{proof}
    Let $f_{X^n}$ be the cumulative loss, $f_{X^n}=\sum_{i=1}^nf_{X_i}$, and observe that $f_{X^n}$ is $Ln$-upper smooth and logarithmic.
    Let $\Fcal=\mysetinline{f_{X^n}}{X^n\in\Xcal^n}$ and $\gamma(\theta)=\lambda \norm{\theta}_1$.
    Also, let $\gamma_0$ be the indicator penalty of the set $\Hcal_B$
    such that
    $\gamma_0(\theta)=0$ if and only if $\theta\in\Hcal_B$ and otherwise
    $\gamma_0(\theta)=\infty$.
    Then, we have $\reg_n(w|\Hcal_B)=\lreg(w|\gamma_0)$
    where $\lreg$ is taken with respect to $f_{X^n}$.
    Now, observe that
    \begin{align*}
        \lreg(w_{\lambda,Ln}^\st|\gamma_0)
        &\le\lreg(w_{\lambda,Ln}^\st|\gamma - \lambda B)
        \\&\quad (\because \gamma_0\ge \gamma - \lambda B)
        \\&\le
        \ln w_{\lambda,Ln}^\st\sbr{e^{-\gamma+\lambda B}},
        \\&\quad (\because\mathrm{Theorem~\ref{thm:set_repr_envelope}})
        \\&=
        \lambda B + \ln w_{\lambda,Ln}^\st\sbr{e^{-\gamma}},
    \end{align*}
    which, combined with Corollary~\ref{cor:ec_ell1} where $\lambda=\sqrt{2Ln\ln (d/\sqrt{Ln})}$,
    yields the asymptotic equality.
    The proof of the minimaxity is adopted from the existing analysis on the minimax \emph{risk}~(see Section~\ref{sec:proof_minimaxity} for the rigorous proof and Section~\ref{sec:discussion_minimax_risk} for detailed discussions).
\end{proof}

\if0
\paragraph{Generalization}
As for generalization error bounds,
assume that $X=X^n=(X_1,\ldots,X_n)$ are i.i.d.~random variables for now.
Let $\theta_{X}^*$ be one of the critical points~(local/global minima or saddle points)
of the $\ell_1$-regularized objective function $F(\theta)=\sum_{i=1}^n f_{X_i}(\theta)+\lambda\norm{\theta}_1$ found by some algorithm.
Let $R(w)=\EE_{X_1}f_{X_1}(w)$ be the risk function and
$\Rhat_n(\theta)=\frac1n\sum_{i=1}^n f_{X_i}(\theta)$ be the empirical risk function.
Then, our goal here is to construct $w\in\Mcal_+(\RR^d)$ which gives small $R(w)$
in comparison to $\Rhat(\theta_X^*)$.
In particular, we consider the first-order Bayesian posterior given by
\begin{align*}
    &w(\d \theta|X^n,t)=
    \\&\quad w(\d\theta) \prod_{i=1}^t
    \exp\cbr{-\frac{L}{2}\norm{\theta-\theta^\star_{X}}^2
    -\nabla f_{X_i}(\theta^\star_{X})^\top \theta},
\end{align*}
for $t\in[n]$,
which is computed only with the first-order information of the loss functions $f_{X_i}$.

\begin{theorem}[Generalization error bound]
    \label{thm:generalization_error_bound}
    Suppose that $f_{X}$ is $L$-upper smooth and logarithmic.
    Let $T$ be a uniform random variable ranging over $[n]$
    and take $\lambda=\sqrt{2Ln\ln (d/\sqrt{Ln})}$.
    Then, we have
    \begin{align*}
        &\EE_{X,T} \sbr{R\rbr{w_{\lambda, Ln}^{\st}(\cdot|X,T)}}
        \le
        \\&
        \EE_{X} \sbr{
            \Rhat_n(\theta^*_{X})
            +\sqrt{
                \frac{2L\ln \frac{d}{\sqrt{Ln}}}{n}
            }\rbr{\norm{\theta^*_{X}}_1+\frac{1}{\sqrt{2\ln^3 \frac{d}{\sqrt{Ln}}}}}
        }.
    \end{align*}
\end{theorem}

Note that, in both cases,
the complexities of learning problems are bounded with $B+\ln w\sbr{e^{-\gamma}}$
with $w\in\Ecal(Ln\Qcal+\gamma)$.
Thus, finding $w\in\Ecal(Ln\Qcal+\gamma)$ with small $\ln w\sbr{e^{-\gamma}}$ is also
good for minimizing regret and generalization error,
not only for bounding the envelope complexity.
\fi

\section{Visual Comparison of the ST Prior and the Tilted Jeffreys Prior}
\label{sec:visual}
Now, we verify the results on the $\ell_1$-regularization obtained above.
In particular, we compare the worst-case regrets achievable with Bayesian predictors
to the minimax regret, i.e., the Shtarkov complexity.

\paragraph{Setting}
We adopted the one-dimensional quadratic loss functions with curvature one, $q\in\Qcal^1$,
and the $\ell_1$-penalty function, $\gamma(\theta)=\lambda\abs{\theta}$.
We varied the penalty weight $\lambda$ from $10^{-1}$ to $10^{1}$ and observed
how the worst-case regret of each Bayesian predictor changes.
Specifically, we employed the spike-and-tails~(ST) prior~\eqref{eq:prior_l1} and the tilted Jeffreys prior for the predictors.
Note that, in this case, the tilted Jeffreys prior is nothing more than the double exponential prior
given by $\pi^{\rm Jeff'}_\lambda(\d\theta)=\frac{\lambda}{2}e^{-\lambda\abs{\theta}}\d\theta$.

\paragraph{Results}
In Figure~\ref{fig:wreg_st_prior},
the worst-case regrets of the ST prior and the Jeffreys prior are shown
along with the minimax regret~(Optimal).
While the regret of the tilted Jeffreys prior is almost same as the optimal regret where $\lambda$ is small,
it performs poorly where $\lambda$ is large.
On the other hand, the ST prior performs robustly well in the entire range of $\lambda$.
Specifically, it converges to zero quadratically where $\lambda$ is large.
Therefore, since one must take $\lambda$ sufficiently large
if $d$ is large, it is implied that the ST prior is
a better choice than the tilted Jeffreys prior.

\begin{figure}[htbp]
    \centering
    \includegraphics[width=3in]{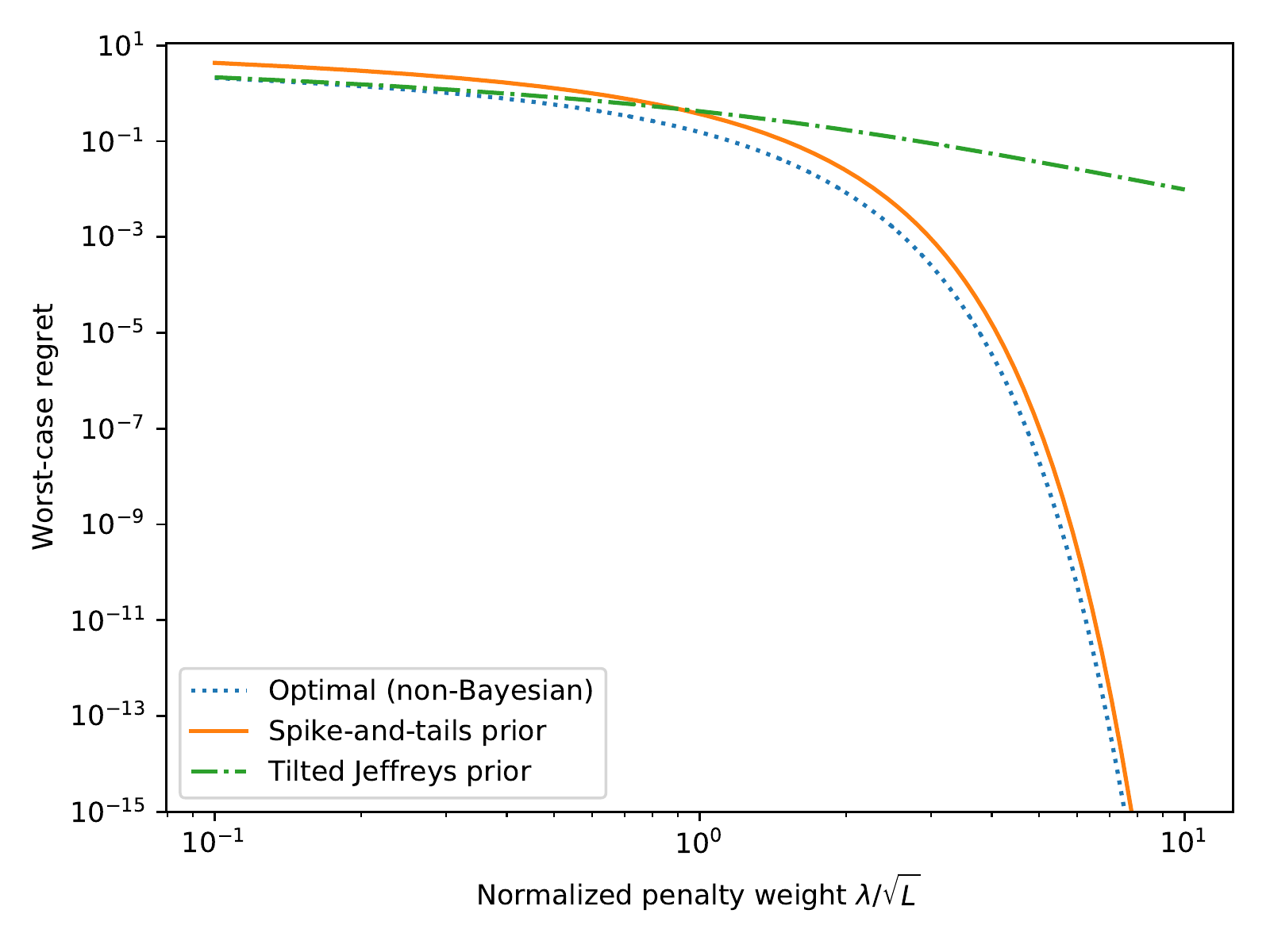}
    \caption{Worst-case regrets of the spike-and-tails~(ST) prior and the tilted Jeffreys prior}
    \label{fig:wreg_st_prior}
\end{figure}

\section{Implications and Discussions}\label{sec:discussion}

In this section, we discuss interpretations of the results and
present solutions to some technical difficulties.

\subsection{Gap between $\lreg^\star$ and $\lreg^\bayes$}

One may wonder if there exists a prior that achieves the lower bound $\lreg^\star(\gamma)$ where $\gamma(\theta)=\lambda\norm{\theta}_1,~\lambda>0$.
Unfortunately, the answer is negative.
With a similar technique of higher-order differentiations used by~\cite{hedayati2012optimality},
we can show that, if $\gamma$ is convex and not differentiable like the $\ell_1$-norm, then
the gap is nonzero, i.e., $\lreg^\star(\gamma)<\lreg^\bayes(\gamma)$.
The detailed statement and the proof is in Section~\ref{sec:gap_bayesian_lreg}.

\subsection{Infinite-dimensional Models}
If the dimensionality $d$ of the parameter space is countably infinite,
the minimax regret $\reg^\star(\Hcal_B)$ with any nonzero radius $B$ diverges.
In this case, one may apply different penalty weights to different dimensions.
For instance, taking different penalty weights for different dimensions,
e.g.,
$\gamma(\theta)=\sum_{j=1}\lambda_j\abs{\theta_j}$
for $\lambda_j=\sqrt{2L \Ln \cbrinline{j\Ln j}}$
and $\Ln x=\ln\max \cbr{e, x}$,
the separability of the envelope complexity guarantees that
$C(\gamma, \Fcal_\gamma)\le \sum_{j=1}^{\infty}\rbr{j\Ln^2 j}^{-1}< +\infty$.
Then, the corresponding countably-infinite tensor product of the one-dimensional ST prior 
$\pi^{\st}_{\cbr{\lambda_j}}(\d\theta)=\prod_{j=1}^\infty \pi^\st_{\lambda_j}(\d\theta_j)$
gives a finite regret with respect the infinite-dimensional models $\Hcal=\mysetinline{\theta\in\RR^\NN}{\gamma(\theta)\le B}$.

\subsection{Comparison to the Titled Jeffreys Priors and Others}

There have been previous studies on the minimax regret with Bayesian predictors~\citep{Takeuchi1998,takeuchi2013asymptotically,watanabe2015achievability,xie2000asymptotic}.
In these studies, the Bayesian predictor based on the Jeffreys prior~(namely Jeffreys predictor) is proved to attain minimax-regret asymptotically under some regularity conditions.
The tilted Jeffreys prior, which takes the effect of penalization $\gamma$ into consideration, is given by \cite{grunwald2007minimum} as
$
\pi_{\rm Jeff'}(\d\theta)\propto \d\theta \sqrt{\det I(\theta)}e^{-\gamma(\theta)}
$,
where $I(\theta)$ denotes the Fisher information matrix.
In the case of quadratic loss functions $\Qcal$, as the Fisher information is equal to identity, we have
$
\pi_{\rm Jeff'}(\d\theta)\propto e^{-\gamma} \d\theta 
$.
Therefore, it implies that taking the uniform pre-prior $w(\d\theta)\propto\d\theta$ is good for smooth models under
the conventional large-sample limit.
This is in very strong contrast with our result,
where completely nonuniform preprior $w_\lambda^\st$ performs better
with high-dimensional models.


\if0
In terms of this comparison, the implication of our results is as follows.
Lemma~\ref{lem:ec_quad_l1} implies
that the tilted Jeffreys prior is suboptimal when $n\ll d$.
Although we do not prove here,
the worst-case regret of the tilted Jeffreys prior actually decays polynomially in $\lambda$,
whereas that of the prior given by \eqref{eq:prior_l1} decays in a super exponential rate
as shown in Figure~\ref{fig:wreg_st_prior}.
Therefore, if $n\ll d$,
one should employ the ST prior
rather than the asymptotically optimal one
if $\ell_1$-regularization is concerned.
\fi

\subsection{Comparison to Online Convex Optimization}
So far, we have considered the luckiness minimax regret,
which leads to the adaptive minimax regret.
\if0
Penalty terms $\gamma$ can be seen as soft constraints on the parameter $\theta$
such that $\gamma(\theta)\le B_X$,
where the bound $B_X$ depends on data $X$.
One may consider hard constraints such that the bound is constant, i.e., $\gamma(\theta)\le B$.
Actually, the regret for the hard constraints is bounded by that for the soft constraints,
$\mmr(\mysetinline{\theta}{\gamma(\theta)\le B})\le B+\mmr(\gamma)$.
Therefore, by Corollary~\ref{cor:ec_ell1},
the minimax regrets of the unit $\ell_1$-balls with $L$-upper smooth loss functions
are bounded by
$\lambda+d\ln(1+\frac{2L}{e\lambda^2}\exp\cbr{-\frac1{2L}\lambda^2})$.
\fi
Perhaps surprizingly, our minimax regret bound coincides with
the results given in the literature of online convex optimization,
where different assumptions on the loss functions and predictors are made.
Specifically, with $\lambda=\sqrt{2L\ln d}$,
the regret bound is reduced to $\sqrt{2L\ln d}+1/e$.
This coincides with the standard no-regret rates of online learning such as Hedge algorithm~\citep{freund1997decision} and high-dimensional online regression~\citep{gerchinovitz2014adaptive},
where $L$ is referred to as the number of trials $T$ and $d$ is referred to as the number of experts or dimensions $n$.
Moreover, with $\lambda=1$,
the regret bound is reduced to $O(d\ln L)$.
This is equal to the minimax-regret rate achieved under large-sample asymptotics such as in~\cite{hazan2007logarithmic,cover2011universal}.

Note that, the conditions assumed in those two regimes are somewhat different.
In our setting, loss functions are assumed to be upper smooth and satisfy some normalizing condition to be logarithmic losses,
while the boundedness and convexity of loss functions is often assumed in online learning.
Moreover, we have employed Bayesian predictors,
whereas more simple online predictors are typically used in the context of the online learning.

\subsection{Comparison to Minimax Risk over $\ell_1$-balls}
\label{sec:discussion_minimax_risk}
In the literature of high-dimensional statistics,
the minimax rate of \emph{statistical risk} is also
achieved with $\ell_1$-regularization~\citep{donoho1994minimax},
when the true parameter $\theta$ is in the unit $\ell_1$-ball.
Although both risk and regret are performance measures of prediction,
there are two notable difference.
One is that risks are calculated under some assumptions on true statistical distribution,
whereas regrets are defined without any assumptions on data.
The other is that risks are typically considered with in-model predictor,
i.e., predictors are restricted to a given model,
whereas regrets are often considered with out-model predictors
such as Bayesian predictors and online predictors.
Therefore, the minimax regret can be regarded as a more agnostic complexity measure
than the minimax risk.

If we assume Gaussian noise models and adopt the logarithmic loss functions,
the minimax rate of the risk is given as $\sqrt{2L\ln d/\sqrt{L}}$ according to \cite{donoho1994minimax}.
Interestingly,
this is same with the rate of the regret bound given by Theorem~\ref{thm:regret_bound} where $L=Ln$.
Moreover, the minimax-risk optimal penalty weights $\lambda$
is also minimax-regret optimal in this case.
Therefore, if the dimensionality $d$ is large enough compared to $L$ ($n$ in case of online-learning),
making no distributional assumption on data costs nothing
in terms of the minimax rate.

\section{Conclusion}
\label{sec:conclusion}

In this study,
we presented a novel characterization of the minimax regret for logarithmic loss functions, called the envelope complexity,
with $\ell_1$-regularization problems.
The virtue of the envelope complexity is that
it is much easier to evaluate than the minimax regret itself
and able to produce upper bounds systematically.
Then, using the envelope complexity,
we have proposed the spike-and-tails~(ST) prior, which almost achieves the luckiness minimax regret against smooth loss functions under $\ell_1$-penalization.
We also show that the ST prior actually adaptively achieves the 2-approximate minimax regret under high-dimensional asymptotics $\omega(1)=\ln d/\sqrt{n}=o(n)$.
In the experiment, we have confirmed our theoretical results:
The ST prior outperforms the tilted Jeffreys prior where the dimensionality $d$ is high, whereas the tilted Jeffreys prior is optimal
if $n\gg d$.

\paragraph{Limitation and future work}
The present work is relying on the assumption of the smoothness and logarithmic property on the loss functions.
The smoothness assumption may be removed by considering the smoothing effect of stochastic algorithms like stochastic gradient descent as in~\cite{kleinberg2018alternative}.
As for the logarithmic assumption,
it will be generalized to evaluate complexities with non-logarithmic loss functions with the help of tools that have been developed in the literature of information theory such as in~\cite{yamanishi1998decision}.
Finally, since our regret bound with the ST prior is quite simple~(there are only the smoothness $L$ and the radius $B$ except with the logarithmic term),
applying these results to concrete models such as deep learning models
would be interesting future work as well as the comparison to the existing generalization error bounds.


\appendix

\section{Asymptotic Lower Bound of Shtarkov Complexity for Standard Normal Location Models}
\label{sec:lower_bound_sc_std_norm_loc}
We show an asymptotic lower bound of the Shtarkov complexity of standard normal location models.
\begin{lemma}
    \label{lem:lower_bound_sc_std_norm_loc}
    Consider the $d$-dimensional standard normal location model,
    given by $f_X(\theta)=\frac12\norm{X-\theta}_2^2+\frac{d}2\ln 2\pi$,
    where $X\in\Xcal=\RR^d$.
    Let $\gamma=\lambda\norm{\theta}_1$ for $\lambda\ge 0$.
    Then we have
    \begin{align*}
        S(\gamma)\ge
        d\ln \rbr{1+\frac{e^{-\lambda^2/2}}{\sqrt{2\pi}\lambda^3}(1+o(1))}.
    \end{align*}
\end{lemma}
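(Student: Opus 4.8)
The plan is to exploit the separability of the standard normal location problem under the $\ell_1$ penalty and reduce the whole computation to a one-dimensional Gaussian-tail estimate. Starting from the Shtarkov formula~\eqref{eq:shtarkov_complexity} with $\nu$ the Lebesgue measure on $\RR^d$, write $S(\gamma)=\ln\int_{\RR^d}e^{-m(f_X+\gamma)}\,\d X$. Since $f_X(\theta)+\gamma(\theta)=\sum_{j=1}^d[\tfrac12(X_j-\theta_j)^2+\lambda|\theta_j|]+\tfrac d2\ln 2\pi$, the inner minimization over $\theta$ splits coordinatewise, so $e^{-m(f_X+\gamma)}=(2\pi)^{-d/2}\prod_{j=1}^d e^{-\phi(X_j)}$ where $\phi(x)\eqdef\min_{\theta\in\RR}[\tfrac12(x-\theta)^2+\lambda|\theta|]$, and therefore $S(\gamma)=d\ln\big(\tfrac1{\sqrt{2\pi}}\int_\RR e^{-\phi(x)}\,\d x\big)$. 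The function $\phi$ is obtained by soft-thresholding: $\phi(x)=\tfrac12 x^2$ for $|x|\le\lambda$ and $\phi(x)=\lambda|x|-\tfrac12\lambda^2$ for $|x|>\lambda$.

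Next I would evaluate the one-dimensional integral exactly. Splitting at $|x|=\lambda$, $\int_\RR e^{-\phi(x)}\,\d x=\int_{-\lambda}^{\lambda}e^{-x^2/2}\,\d x+2e^{\lambda^2/2}\int_\lambda^\infty e^{-\lambda x}\,\d x=\int_{-\lambda}^{\lambda}e^{-x^2/2}\,\d x+\tfrac2\lambda e^{-\lambda^2/2}$. Writing $\int_{-\lambda}^\lambda e^{-x^2/2}\,\d x=\sqrt{2\pi}-2\int_\lambda^\infty e^{-x^2/2}\,\d x$ and integrating the tail by parts once, $\int_\lambda^\infty e^{-x^2/2}\,\d x=\tfrac1\lambda e^{-\lambda^2/2}-\int_\lambda^\infty x^{-2}e^{-x^2/2}\,\d x$, the $\tfrac1\lambda e^{-\lambda^2/2}$ terms cancel and one obtains the clean identity $\int_\RR e^{-\phi(x)}\,\d x=\sqrt{2\pi}+2\int_\lambda^\infty x^{-2}e^{-x^2/2}\,\d x$, i.e. $S(\gamma)=d\ln\big(1+\tfrac2{\sqrt{2\pi}}\int_\lambda^\infty x^{-2}e^{-x^2/2}\,\d x\big)$.

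It then remains to bound the residual integral from below by $\tfrac12\lambda^{-3}e^{-\lambda^2/2}(1+o(1))$ as $\lambda\to\infty$ (its sharp leading term is in fact $\lambda^{-3}e^{-\lambda^2/2}$); combined with monotonicity of $\ln$ this gives the stated inequality. This last bound follows from one more integration by parts, $\int_\lambda^\infty x^{-2}e^{-x^2/2}\,\d x=\lambda^{-3}e^{-\lambda^2/2}-3\int_\lambda^\infty x^{-4}e^{-x^2/2}\,\d x$, together with $\int_\lambda^\infty x^{-4}e^{-x^2/2}\,\d x\le\lambda^{-2}\int_\lambda^\infty x^{-2}e^{-x^2/2}\,\d x$, which yields $\int_\lambda^\infty x^{-2}e^{-x^2/2}\,\d x\ge\lambda^{-3}e^{-\lambda^2/2}/(1+3\lambda^{-2})$; alternatively one may quote the classical Komatu inequalities for Mills' ratio~\citep{komatu1955elementary}. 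I expect the only delicate point to be precisely this step: since the first-order contribution $\lambda^{-1}e^{-\lambda^2/2}$ cancels identically, the $\lambda^{-3}$ rate is dictated entirely by the second term in the Gaussian-tail expansion, so the estimate must be carried to that order with an explicit $(1+o(1))$ factor rather than a one-term Mills bound; the rest is bookkeeping.
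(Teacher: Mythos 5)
Your proof is correct, and up to the evaluation of the Shtarkov integral it coincides with the paper's: both compute $m(f_X+\gamma)$ by soft-thresholding, reduce to one dimension, and split the integral at $|x|=\lambda$, arriving at the same exact expression (the paper writes it as $d\ln\bigl[2\Phi(\lambda)-1+\sqrt{2/\pi}\,e^{-\lambda^2/2}/\lambda\bigr]$, which is your $d\ln\bigl(1+\tfrac{2}{\sqrt{2\pi}}\int_\lambda^\infty x^{-2}e^{-x^2/2}\,\d x\bigr)$ after one integration by parts). The two arguments part ways only at the final asymptotic step. The paper quotes Komatu's second-order Mills-ratio bound $1-\Phi(\lambda)<2\phi(\lambda)/(\sqrt{2+\lambda^2}+\lambda)$ and leaves the remaining "few lines of elementary calculation" implicit; that calculation is exactly the cancellation of the order-$\lambda^{-1}e^{-\lambda^2/2}$ terms that you make explicit. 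You instead obtain the identity $S(\gamma)=d\ln\bigl(1+\tfrac{2}{\sqrt{2\pi}}\int_\lambda^\infty x^{-2}e^{-x^2/2}\,\d x\bigr)$ directly and lower-bound the residual by a second integration by parts, $\int_\lambda^\infty x^{-2}e^{-x^2/2}\,\d x\ge \lambda^{-3}e^{-\lambda^2/2}/(1+3\lambda^{-2})$, which is self-contained (no citation needed) and in fact yields the sharper constant $2e^{-\lambda^2/2}/(\sqrt{2\pi}\lambda^3)$ rather than the stated $e^{-\lambda^2/2}/(\sqrt{2\pi}\lambda^3)$ — a factor-two improvement consistent with the fact that Komatu's bound, while second-order accurate in location, loses a constant here. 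Your flag of the delicate point is also exactly right: since the first-order tail term cancels identically, a one-term Mills bound gives nothing, and the estimate must be carried to the $\lambda^{-3}$ order. So: same decomposition, different (and arguably cleaner and slightly stronger) closing estimate; what the paper's route buys is brevity via a known inequality, what yours buys is a fully explicit, citation-free derivation of the exact identity plus a sharper leading constant.
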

\begin{proof}
    By definition of $S(\gamma)$, we have
    \begin{align*}
        S(\gamma)&=\ln\int e^{-m(f_X+\gamma)}\nu(\d X)
        \\&=d\ln\int_{-\infty}^\infty \frac1{\sqrt{2\pi}}\sup_{t\in\RR}\exp
        \sbr{-\frac12 (x-t)^2-\lambda\abs{t} } \d x
        \\&=d\ln\frac1{\sqrt{2\pi}}\left[
        \int_{-\infty}^{-\lambda} e^{-\lambda(-\lambda-x)-\frac{\lambda^2}2} \d x
        +\right.
        \\&\qquad\qquad\qquad
        \left.
        \int_{-\lambda}^\lambda e^{-\frac{x^2}2} \d x
        +\int_\lambda^\infty e^{-\lambda(x-\lambda)-\frac{\lambda^2}2} \d x
        \right]
        \\&=d\ln\sbr{
        2\Phi(\lambda)-1+ 
        \frac{2e^{-\lambda^2/2}}{\sqrt{2\pi}}
        \int_0^\infty e^{-\lambda x}\d x
        }
        \\&=d\ln\sbr{
        2\Phi(\lambda)-1
        +\sqrt{\frac{2}{\pi}}\frac{e^{-\lambda^2/2}}{\lambda}
        },
    \end{align*}
    where $\Phi(\lambda)$ denotes the standard normal distribution function.
    Now, by~\cite{komatu1955elementary},
    $\Phi(\lambda)$ is bounded below with
    $\Phi(\lambda)>1-2\phi(\lambda)/(\sqrt{2+x^2}+x)$ for $\phi(\lambda)$ being the standard normal density,
    which yields the lower bound of interest after a few lines of elementary calculation.
\end{proof}

\section{Lower Bound on Minimax Regret of Smooth Models}
\label{sec:proof_minimaxity}

We describe how we adopt the minimax risk lower bound as
to show the minimax-regret lower bound.

The story of the proof is based on \cite{donoho1994minimax}.
First, the so-called three-point prior is constructed
to approximate the least favorable prior.
Then, since the approximate prior violates the $\ell_1$-constraint,
the degree of the violation is shown to be appropriately bounded
to derive a valid lower bound.

The goal of our proof is to establish
a lower bound on the minimax regret with respect to logarithmic losses,
whereas their proof is about the minimax risk with respect to $\ell_q$-loss.
Therefore, below we present the proof highlighting
(i) an approximate least favorable prior for \emph{logarithmic losses} over $\ell_1$-balls
and (ii) the way to bound \emph{regrets} on the basis of risk bounds.

Let $\Hcal=\mysetinline{\theta\in \RR^d}{\norm{\theta}_1\le B}$ be a $\ell_1$-ball.
Let $X\sim \Ncal_d[\theta, I_d/L]$ be a $
d$-dimensional normal random variable with mean $\theta\in H$ and precision $L>0$.
We denote the distribution just by $X\sim\theta$ where any confusion is unlikely.
Let $h\in\allh$ be a predictor associated with any sub-probability distribution $P(\cdot|h)\in\Mcal_+(\RR^d)$.
For notational simplicity, we may write $f_X(\theta)=\frac{L}2 \norm{X-\theta}_2^2+\frac{d}2 \ln \frac{2\pi}{L}$ and $f_X(h)=\ln \frac{\d P(X|h)}{\d \nu}$ where $\nu$ is the Lebesgue measure over $\RR^d$.

Consider the risk function
\begin{align*}
    R_d(h, \theta) &\eqdef \EE_{X\sim \theta} \sbr{f_X(h)-f_X(\theta)},
\end{align*}
and the Bayes risk function
\begin{align*}
    R_d(h, \pi) &\eqdef \EE_{\theta\sim \pi} \sbr{R_d(h,\theta)},
\end{align*}
where $\pi\in\Pcal(\Hcal)$ denotes prior distributions on $\Hcal$.
Then, the minimax Bayes risk bounds below the minimax regret,
\begin{align*}
    \reg^\star(\Hcal)
    &=\inf_{h\in\allh} \sup_{\theta\in\Hcal}\sup_{X\in \RR^d} f_X(h)-f_X(\theta)
  \\&\ge\inf_{h\in\allh} \sup_{\pi\in \Pcal(\Hcal)}
    \EE_{\theta\sim \pi}\EE_{X\sim \theta}\sbr{f_X(h)-f_X(\theta)}
    \\&= \inf_{h\in\allh} \sup_{\pi\in \Pcal(\Hcal)} R_d(h, \pi).
\end{align*}
The minimax theorem states that there exists a saddle point $(h^*, \pi_*)$ such that
\begin{align*}
    R_d(h^*,\pi_*)
    &=\inf_{h\in\allh} \sup_{\pi\in \Pcal(\Hcal)} R_d(h, \pi)
    \\&=\sup_{\pi\in \Pcal(\Hcal)} \inf_{h\in\allh} R_d(h, \pi)
    \eqdef \sup_{\pi\in\Pcal(\Hcal)}R_d(\pi),
\end{align*}
and $\pi_*$ is referred to as the least favorable prior.
We want to approximate $\pi_*$ to give an analytic approximation of $R_d(\pi_*)$, which is a lower bound of $\reg^\star(\Hcal)$.

Let $F_{\epsilon,\mu}\in\Pcal(\RR)$ be the three-point prior defined by
\begin{align*}
    F_{\epsilon,\mu}=(1-\epsilon)\delta_0+\frac\epsilon2\rbr{\delta_{-\mu}+\delta_\mu}
\end{align*}
for $\epsilon,\mu>0$.
We show that the corresponding achievable Bayes risk $R_1(F_{\epsilon,\mu})$ tends to be the entropy of the prior $F_{\epsilon,\mu}$ in some limit of small $\epsilon$.

\begin{lemma}
    \label{lem:three_point_risk}
    Take $\mu=\mu(\epsilon)=\sqrt{2L^{-1}\ln\epsilon^{-1}}$.
    Let $H_\epsilon=H(F_{\epsilon,\mu})=(1-\epsilon)\ln (1-\epsilon)^{-1}+\epsilon \ln 2\epsilon^{-1}$ be the entropy of the prior.
    Then we have
    \begin{align*}
        R_1(F_{\epsilon,\mu})\sim H_\epsilon \sim \epsilon \ln \frac1\epsilon
    \end{align*}
    as $\epsilon\to0$. Here, $x\sim y$ denotes the asymptotic equality
    such that $x/y\to 1$.
\end{lemma}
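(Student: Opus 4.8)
The plan is to identify $R_1(F_{\epsilon,\mu})$ with a mutual information and then estimate that quantity at the threshold scaling $\mu=\mu(\epsilon)$. Write $q_\theta$ for the density of $\Ncal_1[\theta,1/L]$ and $\bar q=(1-\epsilon)q_0+\tfrac{\epsilon}{2}q_\mu+\tfrac{\epsilon}{2}q_{-\mu}$ for the Bayes mixture. For a predictor $h$ with sub-probability density $p_h$ one has $R_1(h,F_{\epsilon,\mu})=\EE_{\theta\sim F_{\epsilon,\mu}}\mathrm{KL}(q_\theta\|p_h)$. Decomposing $p_h=\alpha\tilde p$ with $\alpha=\int p_h\le 1$ and $\tilde p$ a probability density gives $\mathrm{KL}(q_\theta\|p_h)=\mathrm{KL}(q_\theta\|\tilde p)-\ln\alpha\ge\mathrm{KL}(q_\theta\|\tilde p)$, so the relaxation to sub-probabilities gains nothing, while $\EE_\theta\mathrm{KL}(q_\theta\|p)=I(\theta;X)+\mathrm{KL}(\bar q\|p)$ for any density $p$. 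Hence the infimum over $h$ is attained at $p_h=\bar q$ and $R_1(F_{\epsilon,\mu})=I(\theta;X)$, the mutual information of $\theta\sim F_{\epsilon,\mu}$ and $X\mid\theta\sim\Ncal_1[\theta,1/L]$.

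Because $\theta$ is discrete, $I(\theta;X)=H(\theta)-H(\theta\mid X)\le H(\theta)=H_\epsilon$, which already gives the required upper bound $R_1(F_{\epsilon,\mu})\le H_\epsilon$. For a matching lower bound I would write $I(\theta;X)=(1-\epsilon)\,\mathrm{KL}(q_0\|\bar q)+\tfrac{\epsilon}{2}\mathrm{KL}(q_\mu\|\bar q)+\tfrac{\epsilon}{2}\mathrm{KL}(q_{-\mu}\|\bar q)$ and discard the nonnegative first term, so that it suffices to show $\mathrm{KL}(q_{\pm\mu}\|\bar q)=\ln\tfrac{1}{\epsilon}\,(1-o(1))$.

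To estimate $\mathrm{KL}(q_\mu\|\bar q)=\EE_{X\sim q_\mu}\ln\tfrac{q_\mu(X)}{\bar q(X)}$, substitute $X=\mu+Z/\sqrt{L}$ with $Z\sim\Ncal(0,1)$; using $\tfrac{L}{2}\mu^2=\ln\tfrac{1}{\epsilon}$, hence $\mu\sqrt{L}=\sqrt{2\ln(1/\epsilon)}=:a$, one obtains $q_0(X)/q_\mu(X)=\epsilon e^{-aZ}$ and $q_{-\mu}(X)/q_\mu(X)=\epsilon^4 e^{-2aZ}$, so that
\begin{align*}
    \mathrm{KL}(q_\mu\|\bar q)=\ln\tfrac{2}{\epsilon}-\EE_Z\big[\ln\big(1+2(1-\epsilon)e^{-aZ}+\epsilon^4 e^{-2aZ}\big)\big].
\end{align*}
The crux is that the correction term is $o(\ln(1/\epsilon))$: for $\epsilon$ small the argument of the logarithm is at most $(1+e^{-aZ})^2$, and splitting the expectation at $Z=0$ bounds $\EE_Z\ln(1+e^{-aZ})$ by a universal constant (on $\{Z\ge0\}$, where $\ln(1+e^{-aZ})\le e^{-aZ}$) plus $\tfrac{\ln 2}{2}+a\,\EE[\,|Z|\,\1_{Z<0}]=O(a)=O(\sqrt{\ln(1/\epsilon)})$ (on $\{Z<0\}$, where $\ln(1+e^{-aZ})\le\ln 2+a|Z|$). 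Therefore $\mathrm{KL}(q_\mu\|\bar q)=\ln\tfrac{1}{\epsilon}\,(1-o(1))$, and likewise for $q_{-\mu}$ by symmetry, so $I(\theta;X)\ge\epsilon\ln\tfrac{1}{\epsilon}\,(1-o(1))$.

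Finally, $H_\epsilon=(1-\epsilon)\ln\tfrac{1}{1-\epsilon}+\epsilon\ln\tfrac{2}{\epsilon}$ with $(1-\epsilon)\ln\tfrac{1}{1-\epsilon}=\epsilon+O(\epsilon^2)=o(\epsilon\ln\tfrac{1}{\epsilon})$ and $\epsilon\ln\tfrac{2}{\epsilon}=\epsilon\ln\tfrac{1}{\epsilon}+\epsilon\ln 2\sim\epsilon\ln\tfrac{1}{\epsilon}$, so $H_\epsilon\sim\epsilon\ln\tfrac{1}{\epsilon}$. Combining the three bounds, $\epsilon\ln\tfrac{1}{\epsilon}(1-o(1))\le R_1(F_{\epsilon,\mu})=I(\theta;X)\le H_\epsilon=\epsilon\ln\tfrac{1}{\epsilon}(1+o(1))$, whence $R_1(F_{\epsilon,\mu})\sim H_\epsilon\sim\epsilon\ln\tfrac{1}{\epsilon}$. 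The main obstacle is the estimate in the third paragraph: bounding the Gaussian average of $\ln(1+2(1-\epsilon)e^{-aZ}+\epsilon^4 e^{-2aZ})$, whose magnitude is governed by the left tail $\{Z<0\}$ where $e^{-aZ}$ is exponentially large, and verifying uniformly as $\epsilon\to 0$ that it contributes only $O(\sqrt{\ln(1/\epsilon)})=o(\ln(1/\epsilon))$; the remaining ingredients are the classical Bayes-risk/mutual-information identity and elementary asymptotics.
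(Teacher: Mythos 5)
Your proposal is correct and follows essentially the same route as the paper: identify the Bayes risk with the mutual information of the three-point prior (the Bayes mixture is optimal for log loss), use $I(\theta;X)\le H(\theta)=H_\epsilon$ for the upper bound, and then match it asymptotically by estimating the mixture likelihood ratio under $q_{\pm\mu}$ after the same change of variables $X=\mu+Z/\sqrt L$. The only difference is bookkeeping in the lower bound: the paper keeps the full gap $H_\epsilon-R_1$ and bounds it by $O(\epsilon)$ via Jensen's inequality plus a Gaussian tail estimate, whereas you discard the $(1-\epsilon)\,\mathrm{KL}(q_0\|\bar q)$ term and control the correction by splitting at $Z=0$, incurring an $O(\epsilon\sqrt{\ln(1/\epsilon)})$ slack that is still $o(\epsilon\ln(1/\epsilon))$ and hence suffices.
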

\begin{proof}
    First, we show the famous inequality on the entropy given by $R_1(F_{\epsilon,\mu})\le H_\epsilon$.
    Let $P(\cdot|h)=\EE_{\theta\sim F_{\epsilon,\mu}}P(\cdot |\theta)=
    (1-\epsilon)P(\cdot|0)+\frac{\epsilon}2\rbrinline{P(\cdot|-\mu)+P(\cdot|\mu)}$
    be the Bayes marginal distribution with respect to $F_{\epsilon,\mu}$.
    Then we have
    \begin{align*}
        &H_\epsilon - R_1(F_{\epsilon,\mu})
        \\&= H_\epsilon - R_1(h, F_{\epsilon,\mu})
      \\&= H_\epsilon - \EE_{\theta\sim F_{\epsilon,\mu}} \EE_{X\sim \theta} \ln \frac{\d P(X|\theta)}{\d P(X|h)}
      \\&=H_\epsilon - 
        (1-\epsilon)\EE_{P(X|0)} \ln \frac{\d P(X|0)}{\d P(X|h)}
        -\epsilon \EE_{P(X|\mu )} \ln \frac{\d P(X|\mu)}{\d P(X|h)}
      \\&=
        (1-\epsilon)\EE_{P(X|0)}
        \ln \rbr{1+\frac\epsilon{1-\epsilon}\frac{\d P(X|\mu)+\d P(X|-\mu)}{2\d P(X|0)}}
      \\&\quad +\epsilon \EE_{P(X|\mu)}
        \ln \rbr{1+\frac{1-\epsilon}\epsilon\frac{2\d P(X|0)+\d P(X|-\mu)}{\d P(X|\mu)}}
      \\&\ge 0.
    \end{align*}
    Now, we show that, with the specific value of $\mu=\mu(\epsilon)$,
    the gap is negligible compared to the entropy itself.
    Applying Jensen's inequality, we have
    \begin{align*}
        &H_\epsilon - R_1(F_{\epsilon,\mu})
        \\&\le 
        \epsilon
        +\epsilon \EE_{P(X|\mu)}
        \ln \rbr{1+\rbr{1-\epsilon}\rbr{2e^{-L\mu X}+\epsilon^3 e^{-2L\mu X}}}
      \\&\le 
        \epsilon (1 + \ln 4 + \EE_{P(X|\mu)} \max\cbr{0,\ -2L\mu X})
      \\&=
        \epsilon \rbr{1 + \ln 4 + \EE_{Z\sim \Ncal[0,1]} \max\cbr{0,\ 2\sqrt{L}\mu (Z-\sqrt{L}\mu)}}
        \\&\qquad(\because -\sqrt{L}(X-\mu)=Z)
      \\&\le
        \epsilon \rbr{1 + \ln 4 + 2\sqrt{L}\mu\epsilon}
        \\&=\epsilon \rbr{1 + \ln 4 + 2\epsilon\sqrt{2\ln\frac1\epsilon}}
        =o(H_\epsilon).
    \end{align*}
    Thus we get $H_\epsilon\sim R_1(F_{\epsilon,\mu})$.
\end{proof}

Now we show that the $d$-th Kronecker product of $F_{\epsilon,\mu}$,
$F^d_{\epsilon,\mu}$,
can be used to bound the Bayes minimax risk $R_d(\pi_*)$ with an appropriate choice of $\epsilon$ and $\mu$.
To this end, let $\pi_+=F^d_{\epsilon,\mu}\mid \Hcal$
be the conditional prior restricted over the $\ell_1$-ball $\Hcal$.
\begin{lemma}
    \label{lem:approx_lfp}
    Take $\epsilon\mu =(1-c)B/d$ and $\mu=\sqrt{2L^{-1} \ln \epsilon^{-1}}$ for $0<c<1$.
    Then, if $\epsilon\to 0$ and $d\epsilon\to \infty$,
    we have
    \begin{align*}
        R_d(\pi_*)\ge R_d(\pi_+) \sim R_d(F^d_{\epsilon,\mu})\sim d\epsilon \ln \frac1\epsilon.
    \end{align*}
\end{lemma}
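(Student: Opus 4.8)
\emph{Proof plan.} The strategy is to verify the chain piece by piece. The first inequality $R_d(\pi_*)\ge R_d(\pi_+)$ is immediate, since $\pi_+$ is a prior supported on $\Hcal$ and $\pi_*$ is least favorable over $\Pcal(\Hcal)$. The final equivalence $R_d(F^d_{\epsilon,\mu})\sim d\epsilon\ln\frac1\epsilon$ follows by additivity: for the logarithmic loss the Bayes risk $R_d(\pi)$ equals the mutual information $I(\theta;X)$ (its optimal predictor being the Bayes mixture $P(X\mid h)=\int P(X\mid\theta)\,\pi(\d\theta)$), which is additive across the independent coordinates of the product prior $F^d_{\epsilon,\mu}$ over the product Gaussian model, so $R_d(F^d_{\epsilon,\mu})=d\,R_1(F_{\epsilon,\mu})$; then Lemma~\ref{lem:three_point_risk}, applied with the prescribed $\mu=\mu(\epsilon)$, gives $R_1(F_{\epsilon,\mu})\sim\epsilon\ln\frac1\epsilon$. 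The real work is the middle equivalence $R_d(\pi_+)\sim R_d(F^d_{\epsilon,\mu})$, i.e.\ that restricting $F^d_{\epsilon,\mu}$ to the $\ell_1$-ball barely changes the Bayes risk.

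I would begin from two structural facts: $\pi\mapsto R_d(\pi)=\inf_h\int R_d(h,\theta)\,\pi(\d\theta)$ is concave (an infimum of affine functionals of $\pi$), and $R_d(h,\theta)\ge 0$ for every sub-probability predictor (being a Kullback--Leibler divergence up to a nonnegative normalization term). Next I would bound the escaping mass $1-p:=1-F^d_{\epsilon,\mu}(\Hcal)$. Under $F^d_{\epsilon,\mu}$ one has $\norm{\theta}_1=\mu\,\abs{S}$ with $\abs{S}\sim\mathrm{Bin}(d,\epsilon)$, so $\Hcal=\cbr{\norm{\theta}_1\le B}$ is the event $\abs{S}\le B/\mu=d\epsilon/(1-c)$ (using $\epsilon\mu=(1-c)B/d$); since the threshold exceeds the mean $d\epsilon$ by the fixed factor $1/(1-c)>1$, a Chernoff bound gives $1-p\le e^{-c_2 d\epsilon}$ for some constant $c_2=c_2(c)>0$, so $p\to1$ as $d\epsilon\to\infty$. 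Writing $F^d_{\epsilon,\mu}=p\,\pi_++(1-p)\,\pi_-$ with $\pi_-=F^d_{\epsilon,\mu}\mid\Hcal^c$, concavity and nonnegativity give $R_d(F^d_{\epsilon,\mu})\ge p\,R_d(\pi_+)$, which is the easy direction $R_d(\pi_+)\le p^{-1}R_d(F^d_{\epsilon,\mu})=(1+o(1))\,R_d(F^d_{\epsilon,\mu})$.

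For the matching lower bound on $R_d(\pi_+)$ I would take $h^+$ to be the Bayes mixture associated with $\pi_+$, which is optimal for $\pi_+$, and estimate $R_d(h^+,\pi_-)$. For $\theta$ with $k$ nonzero coordinates, let $\theta_0$ be obtained by zeroing $(k-\floor{B/\mu})_+$ of them; then $\theta_0$ is an atom of $\pi_+$, and bounding the mixture density below by this single atom gives $R_d(h^+,\theta)\le\frac{L}{2}\norm{\theta-\theta_0}_2^2-\ln\pi_+(\cbrinline{\theta_0})\le (k-\floor{B/\mu})_+\ln\tfrac1\epsilon+O(d\epsilon\ln\tfrac1\epsilon)$, using $\tfrac12 L\mu^2=\ln\tfrac1\epsilon$ and that the lightest atom of $\pi_+$ carries $\floor{B/\mu}=\Theta(d\epsilon)$ nonzeros, so $-\ln\pi_+(\cbrinline{\theta_0})=O(d\epsilon\ln\tfrac1\epsilon)$. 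Averaging over $\pi_-$ and using that $\mathrm{Bin}(d,\epsilon)$ conditioned above its mean has expectation $O(d\epsilon)$, hence $\EE_{\pi_-}\sbrinline{(\abs{S}-B/\mu)_+}=O(d\epsilon)$, yields $R_d(h^+,\pi_-)=O(d\epsilon\ln\tfrac1\epsilon)$. Then from $R_d(F^d_{\epsilon,\mu})\le R_d(h^+,F^d_{\epsilon,\mu})=p\,R_d(\pi_+)+(1-p)\,R_d(h^+,\pi_-)$ and $R_d(h^+,\pi_+)=R_d(\pi_+)$ I get $R_d(\pi_+)\ge p^{-1}\sbrinline{R_d(F^d_{\epsilon,\mu})-(1-p)R_d(h^+,\pi_-)}\ge(1-o(1))R_d(F^d_{\epsilon,\mu})$, since $(1-p)R_d(h^+,\pi_-)\le e^{-c_2 d\epsilon}\cdot O(d\epsilon\ln\tfrac1\epsilon)=o(d\epsilon\ln\tfrac1\epsilon)=o(R_d(F^d_{\epsilon,\mu}))$. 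Combining the two directions closes the chain.

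The main obstacle is exactly this last estimate: controlling $R_d(h^+,\pi_-)$, the risk of the $\ell_1$-ball-restricted Bayes predictor against the exponentially unlikely, constraint-violating part of the prior. The delicate point is to confirm that both contributing factors are only $O(d\epsilon\ln\tfrac1\epsilon)$ --- in particular that the lightest atom mass of $\pi_+$ is not too small, and that $\mathrm{Bin}(d,\epsilon)$ conditioned to exceed $d\epsilon/(1-c)$ has mean $O(d\epsilon)$ rather than the trivial $O(d)$ --- so that multiplying by $1-p=e^{-\Omega(d\epsilon)}$ renders the correction negligible; this is where the hypothesis $d\epsilon\to\infty$ enters.
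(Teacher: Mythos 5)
Your proof is correct and shares the paper's overall skeleton (split $R_d(F^d_{\epsilon,\mu})$ into the on-ball and off-ball contributions using the Bayes predictors $h$ for $F^d_{\epsilon,\mu}$ and $h^+$ for $\pi_+$, show the off-ball correction is $o(d\epsilon\ln\frac1\epsilon)$, and get the base rate $d\epsilon\ln\frac1\epsilon$ from additivity of the per-coordinate Bayes risk plus Lemma~\ref{lem:three_point_risk}), but the key technical estimate is handled by a genuinely different device. The paper controls $\EE_{\theta\sim F^d_{\epsilon,\mu}}\sbr{R_d(h^+,\theta)\1\cbr{\theta\notin\Hcal}}$ by Jensen's inequality, dominating the mixture predictor's risk by the averaged plug-in risk $\EE_{\varphi\sim\pi_+}\frac{L}{2}\norm{\varphi-\theta}_2^2$, and then uses only first/second-moment (Chebyshev-type) bounds on the binomial count $N$, so the correction is killed by the polynomially small quantities $P_d$ and $\EE\abs{N-\EE N}/\EE N$. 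You instead lower-bound the Bayes mixture density of $h^+$ by its single atom at the "projection" $\theta_0$ of $\theta$ onto the support grid inside the ball, which converts the off-ball risk into $(N-B/\mu)_+\ln\frac1\epsilon-\ln\pi_+(\cbr{\theta_0})=O(d\epsilon\ln\frac1\epsilon)$ pointwise (on average over $\pi_-$), and then rely on a Chernoff bound making $1-p$ exponentially small in $d\epsilon$. Your route needs the two refinements you flag --- the lightest used atom has only $\Theta(d\epsilon)$ nonzeros so its log-mass is $O(d\epsilon\ln\frac1\epsilon)$, and the binomial conditioned above $d\epsilon/(1-c)$ has conditional mean $O(d\epsilon)$ (its excess over the threshold even has geometrically decaying tails, ratio $\le 1-c$) --- and both hold, so the product $e^{-\Omega(d\epsilon)}\cdot O(d\epsilon\ln\frac1\epsilon)$ is negligible; note the crude bound $\EE_{\pi_-}N\le d$ would not suffice here, so this sharpening is genuinely needed. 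What each approach buys: the paper's argument is more elementary probabilistically (no large-deviation bound, only absolute-moment estimates) but leans on the specific quadratic form of the Gaussian plug-in risk; yours makes the mechanism more transparent (the restricted mixture still "covers" off-ball parameters at cost proportional to the constraint violation) and yields an exponentially small correction term, at the price of slightly more bookkeeping about atom masses and conditional binomial tails. Your explicit mutual-information/additivity justification of $R_d(F^d_{\epsilon,\mu})=d\,R_1(F_{\epsilon,\mu})$ also fills in a step the paper leaves implicit.
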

\begin{proof}
    First of all, the inequality is trivial from the definition of $R_d(\pi)$.
    Moreover, the second asymptotic equality immediately follows from Lemma~\ref{lem:three_point_risk}.

    Now we consider the first asymptotic equality.
    Let $h$ be the Bayes minimax predictor with respect to the prior $F_{\epsilon,\mu}$ and $h^+$ be the one with respect to the conditional prior $\pi_+$.
    Then we have
    \begin{align*}
        &R_d(F^d_{\epsilon,\mu})
        \\&= R_d(h, F^d_{\epsilon,\mu})
      \\&= \EE_{\theta\sim F^d_{\epsilon,\mu}} \sbr{R_d(h, \theta)}
      \\&= F^d_{\epsilon,\mu}(\Hcal) R_d(h, \pi_+)
        + \EE_{\theta\sim F^d_{\epsilon,\mu}} \sbr{R_d(h, \theta)\cdot \1\cbr{\theta\notin \Hcal}}
        \\&\ge  F^d_{\epsilon,\mu}(\Hcal) \cdot R_d(\pi_+)
    \end{align*}
    and
    \begin{align*}
        &R_d(F^d_{\epsilon,\mu})
        \\&\le R_d(h^+, F^d_{\epsilon,\mu})
      \\&= \EE_{\theta\sim F^d_{\epsilon,\mu}} \sbr{R_d(h^+, \theta)}
      \\&= F^d_{\epsilon,\mu}(\Hcal) \cdot R_d(\pi_+)
        + \EE_{\theta\sim F^d_{\epsilon,\mu}} \sbr{R_d(h^+, \theta)\cdot \1\cbr{\theta\notin \Hcal}}.
    \end{align*}
    Let $N$ be the number of nonzero elements in $\theta\sim F^d_{\epsilon,\mu}$.
    Then $N$ is subjects to the Binomial distribution
    $\mathrm{Bin}(d, \epsilon)$.
    On the other hand, the event $\theta\in \Hcal$ is equal to
    $\cbr{\norm{\theta}_1\le B}=\cbr{N\le B/\mu=\EE N/(1-c)}$.
    Therefore, applying the Chebyshev's inequality, we get
    \begin{align*}
        P_d\eqdef F^d_{\epsilon,\mu}(\Hcal^c)
        &= \mathrm{Pr}\cbr{\frac{N - \EE N}{\EE N}> \frac{c}{1-c}}
        \le \frac{(1-c)^2}{c^2d \epsilon}\to 0.
    \end{align*}
    Similarly, we have $\EE \abs{N - \EE N}/\EE N\to 0$.
    Now observe that
    \begin{align*}
        &\EE_{\theta\sim F^d_{\epsilon,\mu}} \sbr{R_d(h^+, \theta)\cdot \1\cbr{\theta\notin \Hcal}}
        \\&\le \EE_{\theta\sim F^d_{\epsilon,\mu}} \EE_{\varphi\sim \pi_+} \sbr{R_d(\varphi, \theta)\cdot \1\cbr{\theta\notin \Hcal}}.
      \\&\le 2L \EE_{\theta\sim F^d_{\epsilon,\mu}} \EE_{\varphi\sim \pi_+} \sbr{\rbr{\norm{\varphi}_2^2+\norm{\theta}_2^2}\cdot \1\cbr{\theta\notin \Hcal}}
      \\&\le 2L\mu^2 \EE\sbr{
            P_d N + {N \cdot \1\cbr{N>B/\mu}}
        }
        \\& \qquad(\because \norm{\theta}_2^2=\mu^2 N)
      \\&\le 2L\mu^2 \EE N\rbr{
            2 P_d  + \frac{\EE \abs{N-\EE N}}{\EE N}
        }
        \\&= 4d\epsilon \ln \frac1\epsilon\rbr{
            2 P_d  + \frac{\EE \abs{N-\EE N}}{\EE N}
        }.
        \\&= o(R_d(F^d_{\epsilon,\mu})).
    \end{align*}
    Thus, combining all above, we get
    \begin{align*}
        &(1+o(1))R_d(\pi_+)
        \\&=(1-P_d) R_d(\pi_+)
        \\&\le R_d(F^d_{\epsilon,\mu})
        \\&\le  (1-P_d) \cdot R_d(\pi_+)
        + \EE_{\theta\sim F^d_{\epsilon,\mu}} \sbr{R_d(h^*, \theta)\cdot \1\cbr{\theta\notin \Hcal}}.
        \\&= (1-o(1)) R_d(\pi_*) 
        + o(R_d(F^d_{\epsilon,\mu})),
    \end{align*}
    which implies the desired asymptotic equality $R_d(F_{\epsilon,\mu})\sim R_d(\pi_+)$.
\end{proof}

Summing these up,
we have an asymptotic lower bound on the minimax regret which is the same as the upper bound given by the ST prior within a factor of two~(see Theorem~\ref{thm:regret_bound}).
This implies that both the regret of the ST prior and the Bayes risk of the prior $\pi_+$ are tight with respect to the minimax-regret rate except with a factor of two.

\begin{theorem}[Lower bound on minimax regret]
    Suppose that $\omega(1)=\ln (d/\sqrt{L})=o(L)$.
    Then we have
    \begin{align*}
        \reg^\star(\Hcal)\gtrsim \frac{B}{2}\sqrt{2L\ln \frac{d}{\sqrt{L}}},
    \end{align*}
    where $x\gtrsim y$ means that there exists $y'\sim y$ such that
    $x\ge y'$.
\end{theorem}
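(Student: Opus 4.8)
The plan is to combine the inequalities $\reg^\star(\Hcal)\ge\sup_{\pi\in\Pcal(\Hcal)}R_d(\pi)\ge R_d(\pi_+)$ established above with Lemma~\ref{lem:approx_lfp}, optimizing the free constant $c$ at the very end. Fix $c\in(0,1)$. First I would check that the two coupled requirements of Lemma~\ref{lem:approx_lfp}, namely $\epsilon\mu=(1-c)B/d$ and $\mu=\sqrt{2L^{-1}\ln\epsilon^{-1}}$, determine a bona fide $\epsilon=\epsilon_c$ once $d/\sqrt L$ is large: eliminating $\mu$ gives the self-consistent equation
\[
\ln\epsilon^{-1}=\ln\frac{d}{\sqrt L}+\tfrac12\ln\!\big(2\ln\epsilon^{-1}\big)-\ln\!\big((1-c)B\big),
\]
whose right-hand side, as a map of $\ln\epsilon^{-1}$ on $[1,\infty)$, is a contraction and hence has a unique fixed point. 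Since $\ln(d/\sqrt L)\to\infty$ by hypothesis, this fixed point satisfies $\ln\epsilon^{-1}\sim\ln(d/\sqrt L)$, and in particular $\epsilon\to 0$.

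Next I would verify the regime conditions of Lemma~\ref{lem:approx_lfp}. We already have $\epsilon\to0$; for $d\epsilon\to\infty$ observe that $d\epsilon=(1-c)B/\mu=(1-c)B\sqrt{L/(2\ln\epsilon^{-1})}$, which diverges precisely because $\ln\epsilon^{-1}\sim\ln(d/\sqrt L)=o(L)$. Thus Lemma~\ref{lem:approx_lfp} applies and yields $\reg^\star(\Hcal)\ge R_d(\pi_+)\sim d\epsilon\,\ln\epsilon^{-1}$. Substituting the asymptotics of $\epsilon$ then gives
\[
d\epsilon\,\ln\epsilon^{-1}
=(1-c)B\sqrt{\tfrac{L}{2\ln\epsilon^{-1}}}\,\ln\epsilon^{-1}
=(1-c)B\sqrt{\tfrac{L\ln\epsilon^{-1}}{2}}
\sim \tfrac{(1-c)B}{2}\sqrt{2L\ln\tfrac{d}{\sqrt L}},
\]
so that $\reg^\star(\Hcal)\ge \tfrac{(1-c)B}{2}\sqrt{2L\ln(d/\sqrt L)}\,(1+o_c(1))$ for every fixed $c\in(0,1)$.

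Finally, since $c$ was arbitrary, I would pass to the limit $c\downarrow 0$: the per-$c$ bound says $\liminf \reg^\star(\Hcal)\big/\big(\tfrac{B}{2}\sqrt{2L\ln(d/\sqrt L)}\big)\ge 1-c$ for all $c$, hence the $\liminf$ is $\ge 1$, which is exactly the claimed relation $\reg^\star(\Hcal)\gtrsim\tfrac{B}{2}\sqrt{2L\ln(d/\sqrt L)}$; concretely one extracts the witness $y'$ by a diagonal choice $c=1/k(n)$ with $k(n)\to\infty$ slowly enough that the $o_{1/k}(1)$ errors stay controlled. The analytic heavy lifting is already done in Lemmas~\ref{lem:three_point_risk} and~\ref{lem:approx_lfp}, so the only real obstacle is bookkeeping: confirming the self-consistent $\epsilon$ falls in the regime $\epsilon\to0$, $d\epsilon\to\infty$ under the stated condition $\ln(d/\sqrt L)=o(L)$, and making the $c\to0$ diagonalization rigorous despite the $c$-dependence of the error terms.
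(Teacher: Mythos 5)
Your proposal is correct and follows essentially the same route as the paper: reduce the minimax regret to the Bayes risk of the restricted three-point product prior, invoke Lemma~\ref{lem:approx_lfp} for each fixed $c$, compute $d\epsilon\ln\epsilon^{-1}\sim(1-c)\frac{B}{2}\sqrt{2L\ln(d/\sqrt{L})}$, and let $c\downarrow 0$. Your additional care about the self-consistent choice of $\epsilon$ (the fixed-point argument) and the explicit diagonalization in $c$ merely fills in details the paper compresses into ``slowly moving $c$ toward zero.''
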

\begin{proof}
    The assumptions of Lemma~\ref{lem:approx_lfp} are
    satisfied for all $0< c < 1$ since
    \begin{align*}
        \epsilon &\lesssim \epsilon\sqrt{\ln \frac1\epsilon}
        =\frac{1-c}d\sqrt{\frac L2}\to 0,
        \\
        d\epsilon&=\rbr{1-c}\sqrt{\frac{L}{2\ln \frac1\epsilon}}
        \sim (1-c)\sqrt{\frac{L}{2\ln \frac{d}{\sqrt L}}}\to \infty.
    \end{align*}
    Thus, we have 
    \begin{align*}
        \reg^\star(\Hcal)\ge R_d(\pi_*)
        \gtrsim d\epsilon \ln \frac1\epsilon
        \sim \rbr{1-c}\frac{B}2\sqrt{2L\ln \frac{d}{\sqrt{L}}}
    \end{align*}
    for all $0<c<1$.
    Slowly moving $c$ toward zero completes the theorem.
\end{proof}

\section{Existence of Gap between $\lreg^\star$ and $\lreg^\bayes$ under $\ell_1$-Penalty}
\label{sec:gap_bayesian_lreg}
Below we show that,
under standard normal location models,
the Bayesian luckiness minimax regret is strictly larger than the non-Bayesian luckiness minimax regret
if $\gamma$ is nontrivial and has a non-differentiable point.
Here we refer to $\gamma$ as \emph{trivial} when
there exists $\theta_0$ such that $\gamma(\theta)=\infty$ for all $\theta\neq \theta_0$.

\begin{lemma}
    \label{lem:gap_bayesian_lreg}
    Let $f_X(\theta)=\frac12\rbr{X-\theta}^2+\frac12\ln 2\pi$ for $X\in\RR$ and $\theta\in\RR$.
    Then, for all nontrivial, convex and non-differentiable penalties $\gamma:\RR\to \RRbar$, 
    \begin{align*}
        \lreg^\star(\gamma)<\lreg^\bayes(\gamma).
    \end{align*}
\end{lemma}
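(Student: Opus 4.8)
The plan is to pit the unique minimax‑optimal predictor against an arbitrary Bayesian one and derive a contradiction from a regularity mismatch: the optimal predictive density of a Gaussian location model under a non‑differentiable penalty cannot be a Gaussian mixture, because the former fails to be real‑analytic while the latter always is. Throughout write $g_\gamma(X)\eqdef\inf_{\theta\in\RR}\bigl\{\tfrac12(X-\theta)^2+\gamma(\theta)\bigr\}$ for the Moreau envelope of $\gamma$; since $\gamma$ is nontrivial and convex, $g_\gamma$ is finite‑valued and convex, hence continuous, on $\RR$. We assume the minimax regret $S(\gamma)$ is finite (as it is whenever $e^{-\gamma}$ is integrable, e.g.\ for $\gamma=\lambda\norm{\theta}_1$); otherwise $\lreg^\bayes(\gamma)\ge S(\gamma)=\infty$ too.

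First I would rewrite both quantities in terms of $g_\gamma$. Because $m(f_X+\gamma)=\tfrac12\ln 2\pi+g_\gamma(X)$, \eqref{eq:shtarkov_complexity} gives $\lreg^\star(\gamma)=S(\gamma)=\ln\int\tfrac1{\sqrt{2\pi}}e^{-g_\gamma(X)}\,\d X$, with the Shtarkov predictor the density $X\mapsto\tfrac1{\sqrt{2\pi}}e^{-g_\gamma(X)-S(\gamma)}$. On the Bayesian side, for a pre‑prior $w$ with $w\sbr{e^{-\gamma}}\in(0,\infty)$ set $\mu=e^{-\gamma}w$; then $w\sbr{e^{-f_X-\gamma}}=(\phi*\mu)(X)$ with $\phi$ the standard normal density, and \eqref{eq:definition_bayesian_loss} together with $\int e^{-f_X(\theta)}\,\d X=1$ yields $\lreg(w\mid\gamma)=\ln\mu(\RR)+\sup_X\bigl[-g_\gamma(X)-\ln\!\bigl(\sqrt{2\pi}\,(\phi*\mu)(X)\bigr)\bigr]$. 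The right side is invariant under rescaling $\mu\mapsto c\mu$, so normalizing to make the supremum vanish gives
\begin{align*}
    \lreg^\bayes(\gamma)=\inf\bigl\{\ln\mu(\RR)\ :\ \mu\in\Mcal_+(\RR),\ (\phi*\mu)(X)\ge\tfrac1{\sqrt{2\pi}}e^{-g_\gamma(X)}\ \text{for all }X\in\RR\bigr\}.
\end{align*}
Integrating the constraint recovers $\mu(\RR)=\int(\phi*\mu)\ge e^{S(\gamma)}$ for feasible $\mu$, i.e.\ $\lreg^\star(\gamma)\le\lreg^\bayes(\gamma)$; the job is to show the inequality is strict.

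Suppose instead equality holds. Pick feasible $\mu_n$ with $\mu_n(\RR)\to e^{S(\gamma)}$. Since $\phi*\mu_n-\tfrac1{\sqrt{2\pi}}e^{-g_\gamma}\ge0$ and integrates to $\mu_n(\RR)-e^{S(\gamma)}\to0$, we get $\phi*\mu_n\to\tfrac1{\sqrt{2\pi}}e^{-g_\gamma}$ in $L^1$. The masses $\mu_n(\RR)$ are bounded, so a subsequence converges vaguely, $\mu_n\to\mu^\star$ for a finite positive measure $\mu^\star$; as $\phi(X-\cdot)\in C_0(\RR)$ for each $X$, this forces $(\phi*\mu_n)(X)\to(\phi*\mu^\star)(X)$ pointwise, whence $\phi*\mu^\star=\tfrac1{\sqrt{2\pi}}e^{-g_\gamma}$ (two continuous functions agreeing a.e.). But $z\mapsto\int\tfrac1{\sqrt{2\pi}}e^{-(z-\theta)^2/2}\,\mu^\star(\d\theta)$ is entire on $\CC$ — the bound $|e^{-(z-\theta)^2/2}|\le e^{(\mathrm{Im}\,z)^2/2}$ is integrable against the finite measure $\mu^\star$ and lets holomorphy pass through — so $\phi*\mu^\star$, hence $e^{-g_\gamma}$, hence $g_\gamma$ (the negative logarithm of a positive real‑analytic function), is real‑analytic on $\RR$. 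Now pick $\theta_0$ where $\gamma$ is finite and not differentiable; then $\partial\gamma(\theta_0)$ is a nondegenerate interval, and for every $X\in\theta_0+\intr\partial\gamma(\theta_0)$ (a nonempty open interval) the unique minimizer defining $g_\gamma(X)$ is $\theta_0$, so $g_\gamma(X)=\tfrac12(X-\theta_0)^2+\gamma(\theta_0)$ there. By the identity theorem a real‑analytic $g_\gamma$ would equal this quadratic on all of $\RR$, which forces the proximal minimizer $\mathrm{prox}_\gamma(X)=\theta_0$ for every $X$, i.e.\ $\partial\gamma(\theta_0)=\RR$; for a convex $\gamma$ this means $\gamma(\theta)=+\infty$ whenever $\theta\ne\theta_0$, contradicting nontriviality. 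Hence $\lreg^\star(\gamma)<\lreg^\bayes(\gamma)$.

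The step I expect to be most delicate is the passage from $L^1$‑convergence of the mixture densities $\phi*\mu_n$ to a limiting \emph{mixing} measure $\mu^\star$ with $\phi*\mu^\star$ still exactly the Shtarkov density — in effect ruling out escape of mass to infinity in a way that would break the identity. The vague‑compactness argument above (leaning on $\phi(X-\cdot)\in C_0(\RR)$) resolves this and, as a byproduct, shows the infimum defining $\lreg^\bayes(\gamma)$ is attained; the remainder is the reformulation in the first two paragraphs together with the two standard facts that a Moreau envelope is locally a centered quadratic over $\theta_0+\intr\partial\gamma(\theta_0)$ and that a Gaussian convolution of a finite measure is real‑analytic.
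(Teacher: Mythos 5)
Your argument is correct, but it reaches the contradiction by a genuinely different mechanism than the paper's. The paper also proceeds by contradiction from an exact-attainment hypothesis (which it justifies only by a brief continuity/completeness remark), derives $w[e^{-f_X-\gamma}]=e^{-m(f_X+\gamma)}$ for every $X$, and then differentiates this identity twice in $X$ via Danskin's theorem: at a kink of $\gamma$ the minimizer map $X\mapsto\theta^*_X$ is locally constant, so the second-derivative identity forces the pre-prior to be a point mass, and the first-derivative identity then forces $\theta^*_X$ to be constant in $X$, contradicting nontriviality. You instead recast both quantities through the Moreau envelope $g_\gamma(X)=\inf_{\theta}\{\frac12(X-\theta)^2+\gamma(\theta)\}$ and Gaussian convolution, prove attainment of the Bayesian infimum honestly by a minimizing-sequence/vague-compactness argument (precisely the point the paper waves at), and then invoke the rigidity of Gaussian mixtures: exact equality would make $e^{-g_\gamma}/\sqrt{2\pi}$ an entire Gaussian mixture, hence real-analytic, while on the open interval $\theta_0+\mathrm{int}\,\partial\gamma(\theta_0)$ around a kink $g_\gamma$ is exactly the quadratic $\frac12(X-\theta_0)^2+\gamma(\theta_0)$, so the identity theorem forces $\gamma$ to be trivial. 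Your route avoids the differentiation-under-the-integral and prox-differentiability steps that the paper leaves unjustified and supplies the compactness argument it omits; the paper's route, closer to the higher-order differentiation technique it attributes to Hedayati and Bartlett, is shorter once those steps are granted and gives the additional structural fact that the only candidate exact-Bayes pre-prior is a point mass. Note that both arguments rest on the same implicit hypotheses: finiteness of $S(\gamma)$ (which you flag; without it the claimed strict inequality degenerates) and that the non-differentiability is a genuine kink, i.e.\ some $\theta_0$ with a nondegenerate subdifferential interval --- the same property the paper needs for its claim that the derivative of $\theta^*_X$ vanishes at kink-related values of $X$.
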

\begin{proof}
    Let $\Fcal=\mysetinline{f_X}{X\in\RR}$ and 
    recall that $\lreg^\bayes(\gamma)=\inf_{w\in\Ecal(\Fcal_\gamma)}\ln w\sbr{e^{-\gamma}}$ by Theorem~\ref{thm:set_repr_envelope}.
    Let $\norm{\cdot}_\gamma$ be the metric of pre-priors
    $w\in\Mcal_+(\RR)$ given by $\norm{w}_\gamma=w\sbr{e^{-\gamma}}$.
    Owing to the continuity of $w\mapsto \ln w\sbr{e^{-\gamma}}$ and the completeness of $\Ecal(\Fcal_\gamma)\subset \Mcal_+(\RR)$,
    it suffices to show that there exists no pre-prior $w\in \Ecal(\Fcal_\gamma)$
    such that $\ln w\sbr{e^{-\gamma}}=S(\gamma)$.
    Let us prove this by contradiction.
    Now, assume that $\ln w\sbr{e^{-\gamma}}=S(\gamma)$.
    Observe that
    \begin{align*}
        0
        &=w\sbr{e^{-\gamma}}-\exp S(\gamma)\nonumber
        \\&=w\sbr{\int e^{-f_X-\gamma}\nu(\d X)}-\int e^{-m(f_X+\gamma)}\nu(\d X)
        \\&=\int \cbr{w\sbr{e^{-f_X-\gamma}}-e^{-m(f_X+\gamma)}}\nu(\d X),
    \end{align*}
    which means $w\sbr{e^{-f_X-\gamma}}=e^{-m(f_X+\gamma)}$ for almost every $X$
    since $w\in\Ecal(\Fcal_\gamma)$.
    Note that $f_X(\theta)$ is continuous with respect to $X$,
    and then we have $w\sbr{e^{-f_X-\gamma}}=e^{-m(f_X+\gamma)}$ for all $X$.
    After some rearrangement and differentiation, we have
    \begin{align}
        0
        &=\frac\d{\d X}w\sbr{e^{-f_X-\gamma+m(f_X+\gamma)}}
        \nonumber
        \\&=w\sbr{\frac{\d e^{-f_X-\gamma+m(f_X+\gamma)}}{\d X}}
        \nonumber
        \\&=w_{\theta}\sbr{\rbr{\theta-\theta^*_X} e^{-f_X-\gamma+m(f_X+\gamma)}},
        \label{eq:lem_tmp_34789723848902457487}
    \end{align}
    where $\theta^*_X=\arg m(f_X+\gamma)$.
    Here we exploited Danskin's theorem at the last equality.
    One more differentiation gives us
    \begin{align*}
        0
        &=\frac{\d}{\d X}w_\theta\sbr{\rbr{\theta-\theta^*_X} e^{-f_X-\gamma+m(f_X+\gamma)}},
        \\&=w_\theta\sbr{\cbr{\rbr{\theta-\theta^*_X}^2- \frac{\d\theta^*_X}{\d X}} e^{-f_X-\gamma+m(f_X+\gamma)}}
    \end{align*}
    for all $X\in \RR$.
    
    Note that we have $\frac{\d\theta^*_X}{\d X}|_{X=t}=0$
    for any non-differentiable points $t$ of $\gamma$.
    Then it implies that $w=c\delta_{\theta_t^*}$ where $\delta_s$ denotes the Kronecker delta measure.
    Then, according to \eqref{eq:lem_tmp_34789723848902457487}, we have
    \begin{align*}
        0
        &=w_\theta\sbr{\rbr{\theta-\theta^*_X} e^{-f_X-\gamma+m(f_X+\gamma)}}.
        \\&=c\rbr{\theta^*_t-\theta^*_X} e^{-f_X(\theta^*_t)-\gamma(\theta^*_t)+m(f_X+\gamma)},
    \end{align*}
    which means that $\theta^*_X=\theta^*_t$ is a constant independent of $X$.
    However, this contradicts to the assumption
    that $\gamma$ is nontrivial.
\end{proof}

As a remark, we note that this lemma is easily extended to multidimensional exponential family of distributions.

\end{document}